\title{Causal Adversarial Perturbations for Individual Fairness and Robustness in Heterogeneous Data Spaces}
\author{
    Ahmad-Reza Ehyaei \textsuperscript{\rm 1},
    Kiarash Mohammadi \textsuperscript{\rm 2}\textsuperscript{\rm 3},
    Amir-Hossein Karimi \textsuperscript{\rm 1}, 
    Samira Samadi \textsuperscript{\rm 1},
    Golnoosh Farnadi \textsuperscript{\rm 2}\textsuperscript{\rm 3}\textsuperscript{\rm 4}
}
\begin{document}
\newcommand{\variable}[1]{\textsc{#1}}
\newcommand{\ahmad}[1]{{\color{blue} Ahmad: (#1)}}
\newcommand{\ahmadm}[1]{\marginpar{{\small \color{blue}\textbf{#1}}}}
\newcommand{\setareh}[1]{{\color{blue} Setareh: #1}}

\newtheorem{definition}{Definition}
\newtheorem{theorem}{Theorem}
\newtheorem{lemma}{Lemma}
\newtheorem{remark}{Remark}
\newtheorem{proposition}{Proposition}
\newtheorem{assumption}{Assumption}
\newtheorem{example}{Example}
\newtheorem{corollary}{Corollary}
\newtheorem{proof}{Proof}

\newcommand{\V}{\mathbf{V}}
\newcommand{\U}{\mathbf{U}}
\newcommand{\X}{\mathbf{X}}
\newcommand{\Y}{\mathbf{Y}}
\newcommand{\y}{\mathbf{y}}
\newcommand{\Z}{\mathbf{Z}}
\newcommand{\bS}{\mathbf{S}} 
\newcommand{\F}{\mathbf{F}}
\newcommand{\A}{\mathbf{A}}
\newcommand{\R}{\mathbf{R}}
\newcommand{\Pa}{\mathbf{Pa}}
\newcommand{\N}{\mathbf{N}}

\newcommand{\bP}{\mathbb{P}}
\newcommand{\bR}{\mathbb{R}}
\newcommand{\bZ}{\mathbb{Z}}
\newcommand{\bF}{\mathbb{F}}
\newcommand{\bE}{\mathbb{E}}
\newcommand{\bv}{\mathbb{V}}

\newcommand{\cM}{\mathcal{M}}
\newcommand{\cN}{\mathcal{N}}
\newcommand{\cI}{\mathcal{I}}
\newcommand{\cJ}{\mathcal{J}}
\newcommand{\cG}{\mathcal{G}}
\newcommand{\cV}{\mathcal{V}}
\newcommand{\cU}{\mathcal{U}}
\newcommand{\cA}{\mathcal{A}}
\newcommand{\cX}{\mathcal{X}}
\newcommand{\cZ}{\mathcal{Z}}
\newcommand{\cS}{\mathcal{S}}
\newcommand{\cR}{\mathcal{R}}
\newcommand{\cP}{\mathcal{P}}
\newcommand{\cY}{\mathcal{Y}}
\newcommand{\cD}{\mathcal{D}}
\newcommand{\cK}{\mathcal{K}}
\newcommand{\cB}{\mathcal{B}}
\newcommand{\cQ}{\mathcal{Q}}


\newcommand{\sI}{\scalebox{.5}{\textbf{I}}}
\newcommand{\sG}{\scalebox{.5}{\textbf{G}}}
\newcommand{\cf}{\scalebox{.4}{\textbf{CF}}}
\newcommand{\CF}{{\textbf{\small CF}}}
\newcommand{\Plus}{\raisebox{.4\height}{\scalebox{.6}{+}}}
\newcommand{\sgn}{\mathrm{sign}}
\newcommand{\norm}[1]{\left\lVert#1\right\rVert}
\newcommand{\si}{\small{\textbf{+=}}}
\newcommand{\hi}{\small{\textbf{:=}}}
\newcommand{\pa}{\small{\textbf{pa}}}
\newcommand{\amax}{\mathrm{argmax}}
\newcommand{\cat}{\mathrm{cat}}
\newcommand{\con}{\mathrm{con}}
\newcommand{\ind}{\perp\!\!\!\!\perp} 
\newcommand{\CAP}{\textbf{\tiny CAP}}

\makeatletter
\newcommand*\bigcdot{\mathpalette\bigcdot@{.5}}
\newcommand*\bigcdot@[2]{\mathbin{\vcenter{\hbox{\scalebox{#2}{$\m@th#1\bullet$}}}}}
\makeatother
\newcommand{\pluseq}{\mathrel{+}=}
\newcommand{\abs}[1]{\left| #1 \right|}

\DeclarePairedDelimiter{\inner}{\langle}{\rangle}

\maketitle

\begin{abstract}
    As responsible AI gains importance in machine learning algorithms, properties such as fairness, adversarial robustness, and causality have received considerable attention in recent years. However, despite their individual significance, there remains a critical gap in simultaneously exploring and integrating these properties. In this paper, we propose a novel approach that examines the relationship between individual fairness, adversarial robustness, and structural causal models in heterogeneous data spaces, particularly when dealing with discrete sensitive attributes. We use causal structural models and sensitive attributes to create a fair metric and apply it to measure semantic similarity among individuals. By introducing a novel causal adversarial perturbation and applying adversarial training, we create a new regularizer that combines individual fairness, causality, and robustness in the classifier. Our method is evaluated on both real-world and synthetic datasets, demonstrating its effectiveness in achieving an accurate classifier that simultaneously exhibits fairness, adversarial robustness, and causal awareness.
\end{abstract}

\textbf{Keywords:} Individual Fairness, Adversarial Robustness, Structural Causal Model, Adversarial Learning

\section{Introduction}
\label{sec:introduction}
In the ever-evolving landscape of machine learning, responsible AI has emerged as a pivotal focal point. Attributes such as fairness, adversarial robustness, and causality have taken center stage, each carrying its own weight in shaping ethical and socially reliable AI systems. Yet, the prevailing discourse often falls short of comprehensively addressing these dimensions in a unified manner, leaving a gap in our understanding of how they intersect and influence each other. 

Notably, within the realm of fairness, the scientific community has proposed various notions of fairness, broadly categorized as group fairness, examining model's performance across different demographic groups, and individual fairness, assessing model's performance on different individuals~\cite{pessach2022review, mehrabi2021survey}. While group fairness can guarantee similar classification performance on different demographic groups, it does not always guarantee individual fairness, i.e., that similarly qualified individuals,  receive similar outcomes~\cite{binns2020apparent}. 

Various formulations of individual fairness have been proposed in the literature, including Lipschitz~\cite{dwork2012fairness} and $\epsilon$-$\delta$~\cite{john2020verifying}. These formulations presume existence of a metric on the individuals that capture their (qualification) similarity. Such a similarity metric, by definition, is assumed to capture relevant features in the individuals that are important for the classification outcome, and to ignore features that should be irrelevant. For instance, in a hiring scenario, the similarity metric between the individuals could consider work experience and academic degree but should not take into account sensitive attributes. Due to this inherent fairness property in the definition of similarity metric, such metric is often referred to as a \emph{fair metric}. 
Various similarity functions have been proposed as fair metrics, including weighted $\ell_p$ norms, Mahalanobis distance, and feature embedding~\cite{benussi2022individual}.

In the domain of responsible AI, the study of causality is paramount, as the problems addressed often manipulate systems where inter-variable relations are governed by cause-and-effect mechanisms. In fact, many such sensitive attributes such as socio-economic status broadly affect the opportunities presented to individuals, which fair AI aims to rectify. Despite the introduction of causality as a critical lens in fairness literature~\cite{kusner2017counterfactual}, the aforementioned definitions of fair metrics, and the studied domains therein, have struggled to fully encompass the notion of robustness. While causal reasoning offers a foundation for addressing fairness, the inherent challenges of adversarial perturbations and their potential influence on fairness have remained largely unexplored.
 
In response to this gap, the initial step in our study is to propose a framework that defines a fair metric based on the functional structure of the underlying structural causal model. We propose a mathematical approach for protecting sensitive attributes by employing the concept of a pseudometric. Our proposed methodology enables the development of a fair metric that effectively mitigates bias across different levels of sensitive features in heterogeneous data spaces. Using our proposed fair metric we establish a causal adversarial perturbation (CAP) set to identify similar individuals. Subsequently, we analyze the characteristics of the CAP and its relationship with counterfactual fairness and adversarial robustness. Finally, we define a novel causal individual fairness notion based on the fair metric, which we refer to as \textbf{CAPI} fairness.

After formulating \textbf{CAPI} fairness, the next step is to train a classifier that guarantees this notion. This objective can be accomplished by applying bias mitigation methods during the in-processing stage. We ground our theoretical contributions in practicality by demonstrating the implementation of \textbf{CAPI} fairness within different classifiers and datasets. We initially examine the underlying cause of unfairness by defining the concept of \textit{unfair area}. We compute the unfair area for a linear model and design a \emph{post-processing} approach to obtain counterfactual fairness. Subsequently, to attain \textbf{CAPI} fairness which is a stronger notion, we employ adversarial learning techniques~\cite{madry2017towards} and present the first in-processing approach of \textbf{CAPI} fairness regularizer. To the best of our knowledge, this work is the first work that simultaneously addresses adversarial robustness, individual fairness, and causal structures in training a machine learning model. Our contributions are as follows:

\begin{itemize}
 \item \textbf{Causal Fair Metric (\S~\ref{sec:fair_metric}).} 
 Our primary contribution involves the establishment of a semi-latent space for the formulation of a fair metric. The introduction of this semi-latent space is essential to counteract the inherent bias embedded in the structural causal model. Achieving fairness necessitates the assurance that all potential interventions related to varying levels of sensitive attributes are considered. Based on this concept, we develop a fair metric that not only demonstrates effectiveness across diverse sensitive attributes but also incorporates the intricate aspects of the causal framework.
 \item \textbf{Causal Adversarial Perturbation (\S~\ref{sec:cap})} Building upon the foundation laid by our proposed causal fair metric, we introduce the concept of the causal adversarial perturbation. By leveraging the insights gained from our fair metric, causal adversarial perturbation emerges as a mechanism capable of capturing the similarity set in the presence of causal models.
  \item \textbf{CAPI Fairness (\S~\ref{sec:IF})}  Our third contribution entails the introduction of a novel fairness notion \textbf{CAPI} fairness. This concept emerges as a pivotal bridge that seamlessly connects individual fairness, adversarial robustness, and the underpinnings of causal structures. Furthermore, we establish a theoretical foundation for \textbf{CAPI} fairness, demonstrating its connections with counterfactual fairness and adversarial robustness.
  \item \textbf{Unfair Area (\S~\ref{sec:UA})} We further advance the discourse by defining the notion of the \emph{unfair area}, grounded within the context of \textbf{CAPI} fairness, and precisely explain this concept within the framework of a linear structural causal model and a classifier with a post-processing approach.
   \item \textbf{CAPI Fairness Classifier (\S~\ref{sec:CAL})} Our fifth contribution is the introduction of a pioneering in-processing adversarial learning method named \textbf{CAPIFY}. This method stands as the first of its kind to address \textbf{CAPI} fairness—simultaneously embodying individual fairness, adversarial robustness, and an awareness of causal dynamics.
   \item \textbf{Evaluation (\S~\ref{sec:CE})} We validate the efficacy of our approach through extensive evaluations on both real-world and synthetic datasets. These evaluations demonstrate the effectiveness of our proposed framework to simultaneously embody individual fairness, adversarial robustness, and causal awareness. 
\end{itemize}

\paragraph{Related Work.} Several studies have explored individual fairness by utilizing adversarial robustness techniques. 
\citet{doherty2023individual} investigated the association between adversarial robustness and $\epsilon$-$\delta$ individual fairness in Bayesian neural network inference. They considered a specified similarity metric and ensured that the network's output falls within a specified tolerance. 
\citet{benussi2022individual} introduce a method for certifying the $\epsilon$-$\delta$ individual fairness formulation in feed-forward neural networks. They define adversarial perturbation using $d_{\mathrm{fair}}$ and incorporate an adversarial regularizer in the training loss to achieve a balance between model accuracy and IF. 
\citet{xu2021robust} highlight that adversarial training may lead to notable discrepancies in both performance and robustness concerning group-level fairness. To address this issue, they propose a framework called fair robust learning that aims to enhance a model's robustness while ensuring fairness.
\citet{yeom2020individual} employed randomized smoothing techniques to ensure individual fairness in accordance with a specified weighted $\ell_p$ metric.
Several methods tackle individual fairness using Wasserstein distance and distributionally robust optimization \cite{yurochkin2019training, yurochkin2020sensei,vargo2021individually, pmlr-v115-jiang20a,pmlr-v108-jiang20a}. These approaches employ projected gradient descent and optimal transport with Wasserstein distance to optimize a model with perturbations that substantially modify the sensitive information within a specified distribution.
\citet{ruoss2020learning} introduced a mixed-integer linear programming approach to develop data representations that exhibit IF. These representations are designed to capture similarities among individuals by generating latent representations that remain unaffected by specific transformations of the input data.

Numerous prior studies \cite{grari2023adversarial, jung2019algorithmic, kim2018fairness, john2020verifying, adragna2020fairness, petersen2021post} have explored the connections among fairness, robustness, and causal structures individually or in pairs. However, to our knowledge, no previous research has explicitly examined the simultaneous interplay of all these properties.

\section{Preliminaries}
\label{sec:preliminaries}
\paragraph{Notation.}
In this study, random variables are indicated by boldface letters ($\V$), while regular lowercase letters ($v$) represent assignments or instances.
Matrices are denoted by bold uppercase letters, such as $\mathbf{F}$, with $[\mathbf{F}]_i$ referring to the $i$-th column vector of $\mathbf{F}$ and $[\mathbf{F}]_{i,j}$ representing the entry at row $i$ and column $j$ of $\mathbf{F}$.
The feature space $\mathcal{V}$ is constructed using $n$ random variables, denoted as $\V = (\V_1, \ldots, \V_n)$. 
\paragraph{Structural Causal Model (SCM).}
We make the assumption that feature variables $\mathcal{V}$ are generated by a SCM~\cite{pearl2009causality} denoted as $\mathcal{M}$, as described by a tuple $\langle \mathcal{G}, \V, \U, \mathbf{F}, \mathbf{P}_{\mathcal{U}} \rangle$.
Here, $\mathcal{G}$ represents a known directed acyclic graph (DAG), $\V=\{\V_i\}^{n}_{i =1}$ denotes a set of observed (indigenous) random variables, $\U=\{\U_i\}^{n}_{i =1}$ represents a set of noise (exogenous) random variables is assumed to be independent,
and $\bF$ is the set of \textbf{structural equations}, defined as $\bF = \{\V_i := f_i(\V_{\Pa(i)}, \U_i)\}_{i=1}^{n}$. These equations describe the causal relationship between each endogenous variable $\V_i$, its direct causes $\V_{\Pa(i)}$, and an exogenous variable $\U_i$ using deterministic functions $f_i$. Additionally, $\bP_{\U}$ represents the probability distribution over the exogenous variables.
The structural equations $\bF$ establish a mapping $\F: \cU \rightarrow \cV$ from exogenous to endogenous variables, along with an inverse image $\F^{-1}: \cV \rightarrow \cU$ that satisfies the property $\F\left(\F^{-1}(v)\right)=v$ for all $v\in\cV$.
The latent variable distribution entails a unique distribution $\bP(\V) = \prod_{i=1}^n \bP(\V_i \mid \V_{\Pa(i)})$ over the variables $\V$ \citep{peters2017elements}.
The marginal probability distribution of $\bP_{\V}$ with respect to the feature $\V_i$ is denoted as $\bP_{\V_i}$.
\paragraph{Additive Noise Model (ANM).}
In order to infer the unique causal structure $\cG$ from observational data $\mathcal{V}$, it is necessary to impose additional assumptions on the underlying SCM. 
One of the causally identifiable classes within SCMs is additive noise models~\cite{hoyer2009nonlinear}, which posit that the assignments follow the form: 
\begin{equation}
\label{eq:ANM}
\begin{aligned}
\bF = \{\textbf{V}_i \coloneqq f_i(\V_{\Pa(i)})+\textbf{U}_i \}_{i=1}^{n} \quad 
\implies  \\  \U = \V-f(\V) \implies  
\V = (I-f)^{-1}(\U)
\end{aligned}
\end{equation}
where $\U_i$ is an independent known distribution.
As observed in Eq.~\ref{eq:ANM}, obtaining $\U$ from $\V$ is straightforward, where $I$ represents the identity function ($I(v) = v$). Henceforth, we denote the inverse of $(I-f)^{-1}$ as $F$.
A specific class of ANMs is represented by \textbf{linear} SCMs, where the functions $f_i$ are assumed to be linear.
\paragraph{Counterfactuals.}
SCMs are employed to examine the effects of interventions, which entail external manipulations to modify the data generation process (Peters et al., 2017). Two primary types of interventions exist, hard interventions and soft interventions (see \S~\ref{sec:atc}). 
Interventions facilitate the examination of counterfactual statements $v^{\cf}$ for a given instance $v$ under hypothetical interventions on a variable. 
The counterfactual maps for hard interventions are denoted as $v_{\theta}^{\cf} := \CF(v, do(\V_{\cI} \hi \theta))=\F^{\theta}(\F^{-1}(v))$ where $\F^{\theta}$ is a simplified notation for $\F^{do(\V_{\cI} \hi\theta)}$.

\paragraph{Sensitive Attribute.}
A sensitive attribute, such as race, is an ethically or legally significant characteristic used in decision-making processes like hiring, lending, or criminal justice to determine fair treatment or outcomes for individuals or groups.
Let $\bS \in \{\V_1,\dots,\V_n\}$ be a sensitive attribute that has finite levels $\cS = \{s_1, \dots, s_k\}$.
For each instance $v$ of $\cV$, the set of \textbf{counterfactual twins} w.r.t protected variable $\bS$ is defined as
$\ddot{\bv} = \{ \ddot{v}_s = \CF(v,do(\bS \hi s)) :   s \in \cS \}$. 
\paragraph{Fairness.}
In fairness, a sensitive attribute defines a protected group, ensuring that machine learning models or algorithms do not disadvantage them. Researchers have proposed different notions of fairness, such as group fairness and individual fairness (IF)~\cite{tang2022and, le2022survey, mehrabi2021survey}.

Individual-level fairness, introduced by \citet{dwork2012fairness}, ensures that individuals who exhibit similarity according to predefined metrics are treated similarly with regard to outcomes. Various mathematical formulations have been proposed, including the Lipschitz Mapping-based formulation \cite{dwork2012fairness} and the $\epsilon$-$\delta$ formulation \cite{john2020verifying}. The classifier $h$ satisfies the $L$-Lipschitz IF condition when:
\begin{equation}
 d_{\cY}(h(v), h(w)) \leq L\ d_{\cV}(v, w)   \quad \forall v, w \in \cV
\end{equation}
where $d_{\cX}$ and $d_{\cY}$ represent metrics on the input and output spaces respectively, and $L \in \mathbb{R_+}$.
%
%

Counterfactual fairness, introduced by \citet{kusner2017counterfactual}, is another notion of individual-level fairness that deems a decision fair for an individual if it maintains consistency in both the real and a counterfactual scenario.
Formally, it can be expressed as:
\begin{equation}
 \mathbb{E}_{\mathbb{P}_{\mathbf{V}}}[\max_{s\in \mathbf{S}} d_{\mathcal{Y}}(h(\mathbf{V}), h(\ddot{\mathbf{V}}_s))] \leq \epsilon 
\end{equation}
\paragraph{Adversarially Robust Learning.}
Adversarially robust learning aims to create algorithms and models that can withstand adversarial attacks, which involve purposeful perturbations or modifications to input data to induce misclassification or misleading predictions
\cite{goodfellow2014explaining, madry2017towards}.
In this framework, models are trained considering the most challenging perturbations of the data rather than the original data itself:
\begin{equation}
\label{eq:ARL}
\min_\psi {\bE}_{(v,y) \sim \cP_{\mathcal{D}}}[ \max_{\delta \in B_{\Delta}(v)} \ell(h_\psi(v+\delta),y)]
\end{equation}
where, $B_{\Delta}(v)$ is the set of perturbations for the instance $v$, $\cP_{\mathcal{D}}$ is observation distribution, $\ell$ is the classification loss function, and $\psi$ are the weights of the classifier.  

\section{Causal Fair Metric }
\label{sec:metric_fairness}
Achieving individual fairness necessitates the formulation of a fair metric, which, in pursuit of this goal, gives rise to two primary challenges. Firstly, the presence of diverse feature types within the SCM, such as categorical or continuous attributes, introduces complexities stemming from its heterogeneous nature. Secondly, inherent biases may be encoded within the SCM, thereby necessitating that our classifier comprehends the full spectrum of hypothetical interventions applied to instances relative to the levels of sensitive attributes. These twin focal points constitute the primary focus of the ensuing chapter.

\subsection{Fair Metric}
\label{sec:fair_metric}

When dealing with independent features, constructing similarity functions based on their attributes and aggregating them through a product metric is relatively straightforward. However, in the context of a causal structure, the integration of causality into metric formulation becomes pivotal. To tackle this, instances undergo a transformation into an independent space where a metric is established. This established metric is subsequently employed in defining a similarity function within the original feature space via the push-forward metric technique.

In the presence of SCM and a sensitive attribute, the similarity function $d$ should be robust to twins and slight perturbations of non-sensitive features. This means that $d$ should not significantly change after a hard intervention ($do(\bS \coloneqq s)$) with respect to the levels of $S$, or after an additive intervention on continuous features.
In ANMs, a hard intervention removes the causal structure of $\bS$ and is equivalent to setting $f_i$ to zero and fixing $\U_i \coloneqq s$. Moreover, additive intervention is equivalent to adding $\delta$ to $\U_i$ while keeping $f_i$ unchanged. Consequently, the latent space changes during the hard intervention, replacing the sensitive latent variable $\U_i$ with $\bS$ following the distribution $\bP_{\bS}$. This motivates the definition of a \textit{semi-latent space}.
\begin{definition}[Semi-latent Space]
Consider SCM $\mathcal{M}$ with sensitive features indexed by $I$. We define the semi-latent space $\cQ$ as a combination of observed sensitive features $\V_i$ with distribution $\bP_{\V_i}$ where $i \in I$, and latent variables $\U_j$ for other features with distribution $\bP_{\U_j}$. 
\end{definition}
Let $v = (v_1, v_2, \dots, v_n)$ be an instance in the observed space and $u = (u_1, u_2, \dots, u_n) = F^{-1}(v)$ be the corresponding instance in the latent space. The mapping $T: \cV \rightarrow \cQ$ transforms $v$ to the semi-latent space $q = (q_1, q_2, \dots, q_n) = T(v)$, where $q_i$ is defined as follows:
\begin{equation}
    q_i\coloneqq
    \begin{cases}
        v_i & i \in I \\
        u_i & i \notin I
    \end{cases}
\end{equation}
The inverse function $v = T^{-1}(q)$ is determined as follows:
\begin{equation}
    v_i\coloneqq
    \begin{cases}
        q_i & i \in I \\
        f_i(v_{\pa(i)}) + q_i & i \notin I
    \end{cases}
\end{equation}
The identity $v = T^{-1}(T(v))$ holds straightforwardly.

The semi-latent space allows us to describe the counterfactual of instance $v$ w.r.t. hard action $do(\V_I \hi \theta)$:
    \begin{equation}
    \label{eq:doing}
        \mathbf{CF}(v, do(\V_I \hi \theta)) = T^{-1}(T(v)\odot_{I} \theta)
    \end{equation}
Here, $v\odot_{I} \theta$ represents a masking operator that modifies the values of $I$ entries in vector $v$ by replacing $\theta$. 

In the semi-latent space, a causal structure-independent similarity function can be readily established. Let $(\cU_{i}, d_{\cU_i})$ denote the metric space for the latent space corresponding to $\V_i$. For sensitive variables $\bS_i$, $(\cS_i, d_{\cS_i})$ is considered a pseudometric or metric space. Thus, the semi-latent space $(\cQ , d_{\cQ})$ has a metric obtained as the product of metrics. 
To establish a fair metric, incorporating sensitive features into the similarity function is crucial. We adopt the approach by \citet{ehyaei2023robustness}, treating the protected feature as a pseudometric.
\begin{definition}[Pseudometric Protected~\cite{ehyaei2023robustness}]
\label{def:protected}
In SCM $\mathcal{M}$, suppose the sensitive feature $\bS$ endowed with a pseudometric space $(\cS,d_{\cS})$. $\bS$ is partially protected if there are two levels with zero distance:
\begin{equation}
\exists s,s' \in \mathcal{S} \quad  \text{s.t.} \quad   d_{\mathcal{S}}(s, s') =0 \quad  \land \quad s \neq s'
\end{equation}
If for all $s,s' \in \cS$ we have $d_{\cS}(s, s') =0$, then $\bS$ is called protected feature.
\end{definition}

\begin{figure*}[ht]
    \centering
    \includegraphics[width=1\textwidth]{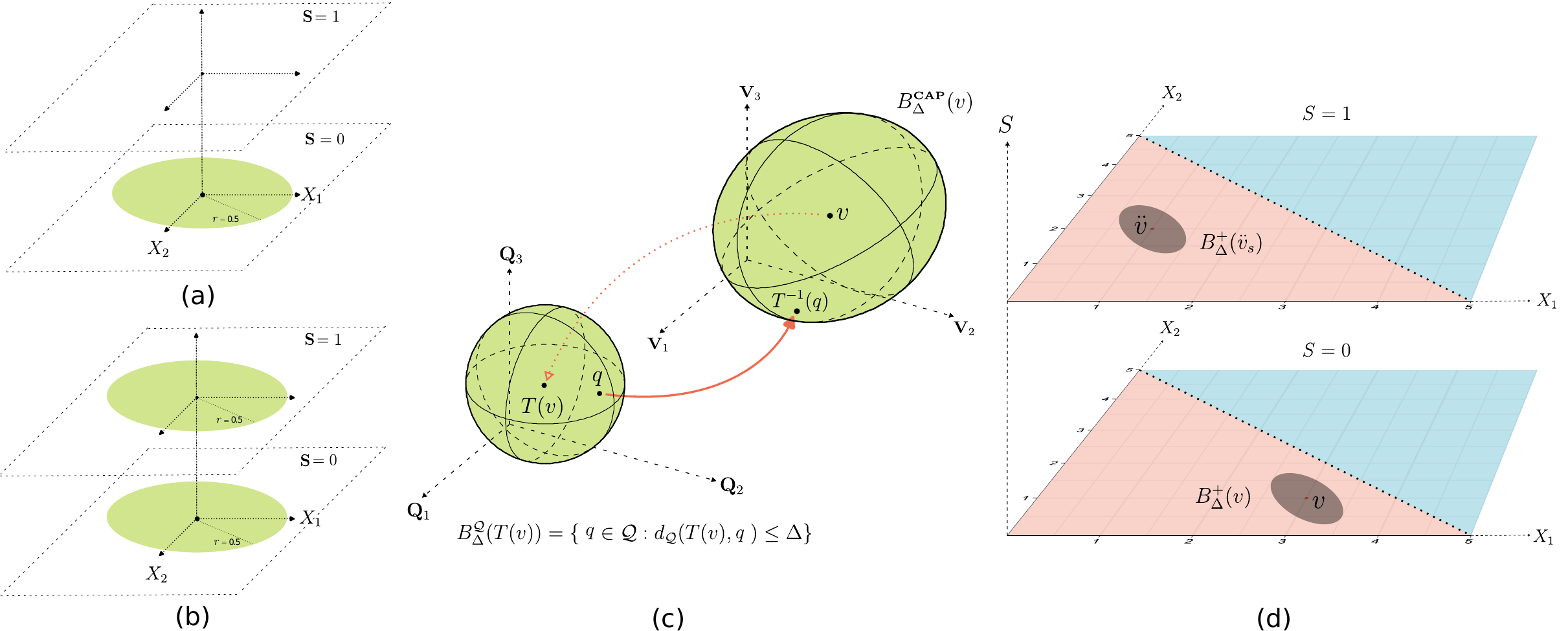}
    \caption{The difference in unit ball shape between considering the sensitive attribute as a Euclidean metric (a) and as a trivial pseudometric (b). The geometric interpretation of CAP is mapping a closed unit ball in semi-latent space (c). Causal adversarial perturbation is the union of continuous perturbations around each twin (d).}
    \label{fig:CAP}
\end{figure*}
By employing the pseudometric for sensitive attributes within the semi-latent space metric, a fair metric can be established in the feature space using the push-forward metric:
\begin{equation}
\label{eq:fairmetric}
 d_{\textit{fair}}(v, w) = d_{\cQ}(T(v), T(w))  
\end{equation}
fair metric enables us to define small perturbations of factual values to identify similar instances.
\subsection{Causal Adversarial Perturbation}
\label{sec:cap}
Adversarial perturbation involves the manipulation of input data to evaluate the resilience of machine learning models. The introduction of a fair metric contributes to the definition of adversarial perturbation in alignment with causal relationships.
\begin{definition}[Causal Adversarial Perturbation]
\label{def:cap}
Let $\mathcal{M}$ be an SCM with sensitive attributes, and $d_{\textit{fair}}$ be its fair metric. The CAP for instance $v$ is defined as:
\begin{equation}
B^{\CAP}_{\Delta}(v) = \{ w \in \mathcal{V} :  d_{\textit{fair}}(v,w)\leq \Delta\}
\end{equation}
where $\Delta \in \mathbb{R}_{\geq 0}$.
CAP can be seen as transforming the unit ball in the semi-latent space using the inverse mapping function $T^{-1}$:
\begin{equation}
B_{\Delta}^{\cQ}(q)=\{p \in \cQ:d_{\cQ}(q,p) \leq \Delta\}.
\end{equation}
then $B^{\CAP}_{\Delta}(v) = T^{-1}(B_{\Delta}^{\cQ}(T(v)))$.
\end{definition}
\begin{remark}
\label{rm:simpleACP}
When all features are continuous or all sensitive features don't have parents, CAP simplifies interpretation. In these cases, the semi-latent space coincides with the latent space, and CAP is achieved by transforming the unit ball in the latent space using the mapping function $F$. Specifically, $B^{\CAP}_{\Delta}(v) = F(B_{\Delta}^{\cU}(F^{-1}(v)))$, where $B_{\Delta}^{\cU}$ represents a closed ball with radius $\Delta$ in the latent space.
\end{remark}
Building upon Remark \ref{rm:simpleACP}, we seek a concise geometric interpretation of CAP
by perturbing only the continuous feature of the SCM. Let $q = (z,x) \in \cQ$ with $x$ as the continuous part and $z$ as the categorical part of features. We define $B^{\cQ_{\Plus}}_{\Delta}$ as the unit ball with a radius of $\Delta$, specifically designed for the continuous part:
\begin{equation*}
B^{\cQ_{\Plus}}_{\Delta}(q)=\{q' = (z',x') \in \cQ: z' = z \quad \land \quad d_{\cX}(x',x) \leq \Delta\}
\end{equation*}
Without loss of generality, assuming a norm on the continuous part, we define the closed unit disk as $D^{\mathcal{X}}_{\Delta} = \{\delta: \| \delta\|\ \leq \Delta \quad  \land \quad  \delta_{|\mathcal{Z}} = 0\}$ where $\mathcal{Z}$ is categorical part of feature space.
Thus, in this scenario, $B^{\cQ_{\Plus}}_{\Delta}$ is derived from:
\begin{equation}
 B^{\cQ_{\Plus}}_{\Delta}(q) = \{q + \delta: \delta_{|\mathcal{X}} \in D^{\mathcal{X}}_{\Delta} \}   
\end{equation}
By defining $B^{\Plus}_{\Delta}(v) = T^{-1}(B^{\cQ_{\Plus}}_{\Delta}(T(v)))$, the CAP can be decomposed into $B^{\Plus}_{\Delta}$, as stated in the following proposition.
\begin{proposition}
\label{pr:ballShape}
Let $B^{\CAP}_{\Delta}(v)$ represent the CAP around instance $v = (z,x)$ with radius $\Delta$, and let $\Theta_{\Delta} = \{ \theta \in \cZ: (\theta,.) \in B^{\cQ}_{\Delta}(T(v)) \}$ denote the set of categorical levels within the perturbation ball. The counterfactual perturbation can be expressed as:
\begin{equation}
\label{eq:union}
B^{\CAP}_{\Delta}(v) = \bigcup_{\theta \in \Theta_{\Delta}} B^{\Plus}_{\Delta_{\theta}}(\CF(v, \theta))
\end{equation}
where $\Delta_{\theta}$ represents the value of the continuous part of $\Delta$. For instance, in the case of using the $L_2$ product metric, $\Delta_{\theta} = \sqrt{\Delta^2- d_{\cZ}(\theta, s)^2}$.
\end{proposition}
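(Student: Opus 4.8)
The plan is to unfold the definition $B^{\CAP}_{\Delta}(v) = T^{-1}(B^{\cQ}_{\Delta}(T(v)))$, exploit the product structure of the semi-latent metric $d_{\cQ}$ to slice the semi-latent ball along its categorical coordinate, and then carry the resulting union through $T^{-1}$. First I would write $T(v) = (s, x)$, separating the sensitive categorical coordinate $s \in \cZ$ from the continuous coordinate $x$ of the semi-latent space. Since $d_{\cQ}$ is a product metric, membership $d_{\cQ}((s,x),(z',x')) \le \Delta$ depends on $(z',x')$ only through $d_{\cZ}(s,z')$ and $d_{\cX}(x,x')$, monotonically in each. Hence fixing the categorical target $z' = \theta$ turns the constraint into a continuous ball $d_{\cX}(x,x') \le \Delta_\theta$ whose radius $\Delta_\theta$ decreases with $d_{\cZ}(s,\theta)$ and which is non-empty exactly when $\theta \in \Theta_\Delta$; for the $L_2$ product metric $d_{\cQ}^2 = d_{\cZ}^2 + d_{\cX}^2$ this gives $\Delta_\theta = \sqrt{\Delta^2 - d_{\cZ}(\theta,s)^2}$.

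This fibering yields the set identity $B^{\cQ}_{\Delta}(T(v)) = \bigcup_{\theta \in \Theta_\Delta} B^{\cQ_{\Plus}}_{\Delta_\theta}((\theta,x))$, because $B^{\cQ_{\Plus}}_{\Delta_\theta}((\theta,x))$ is by definition exactly the slice with categorical part pinned to $\theta$ and continuous part within $\Delta_\theta$ of $x$. I would then push this union through $T^{-1}$, using that preimages commute with unions, to obtain $B^{\CAP}_{\Delta}(v) = \bigcup_{\theta \in \Theta_\Delta} T^{-1}\!\left(B^{\cQ_{\Plus}}_{\Delta_\theta}((\theta,x))\right)$.

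The final step is to identify each term with $B^{\Plus}_{\Delta_\theta}(\CF(v,\theta))$. Since $B^{\Plus}_{\Delta_\theta}(w) = T^{-1}(B^{\cQ_{\Plus}}_{\Delta_\theta}(T(w)))$ by definition, it suffices to check $T(\CF(v,\theta)) = (\theta,x)$. This is immediate from Eq.~\ref{eq:doing}: $\CF(v, do(\V_I \hi \theta)) = T^{-1}(T(v) \odot_I \theta)$, and $T(v)\odot_I\theta = (s,x)\odot_I\theta = (\theta,x)$ replaces the sensitive categorical block $s$ by $\theta$ while leaving the continuous coordinate $x$ untouched; applying $T$ recovers $(\theta,x)$. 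Substituting back completes the union formula.

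The main obstacle is the fibering step: verifying that the semi-latent ball decomposes cleanly over the categorical coordinate. This hinges on the product-metric assumption, which makes the categorical and continuous contributions to $d_{\cQ}$ separable, so that pinning $z'=\theta$ leaves a genuine continuous ball rather than a distorted region. I would be careful to state that the union ranges over precisely the feasible levels $\Theta_\Delta$ (those with $d_{\cZ}(s,\theta) \le \Delta$, so that $\Delta_\theta$ is real and the slice non-empty) and to flag that the explicit radius $\Delta_\theta$ is metric-dependent, the $L_2$ expression being one representative instance. The remaining ingredients, namely commuting $T^{-1}$ with unions and the counterfactual identity of Eq.~\ref{eq:doing}, are routine given the preliminaries.
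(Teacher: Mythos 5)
Your proposal is correct and follows essentially the same route as the paper's proof: unfolding $B^{\CAP}_{\Delta}(v) = T^{-1}(B^{\cQ}_{\Delta}(T(v)))$, slicing the semi-latent ball over the categorical levels $\theta \in \Theta_{\Delta}$ via the product metric, commuting $T^{-1}$ with the union, and closing with the identity $T(\CF(v,\theta)) = T(v)\odot_{I}\theta = (\theta, T(v)_{|_{\con}})$ from Eq.~\ref{eq:doing}. Your explicit flagging of the fibering condition and the metric-dependence of $\Delta_\theta$ is slightly more careful than the paper's write-up but introduces no new ideas.
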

The decomposition of perturbation allows analyzing the shape of CAP for a sensitive attribute, especially for small $\Delta$ values. This aspect is elaborated upon in the subsequent corollary.
\begin{corollary}
\label{pr:decompose}
If $\bS$ is a protected feature and other categorical variables in $\cM$ are not partially protected, there exists a $\Delta_0$ such that for all $\Delta \leq \Delta_0$:
\begin{equation}
\label{eq:decompose}
B^{\CAP}_{\Delta}(v) = \bigcup_{s \in \mathcal{S}} B^{\Plus}_{\Delta}(\ddot{v}_s)
\end{equation}
Consequently, for all $v, w \in \ddot{\mathbb{V}}$, we have $B^{\CAP}_{\Delta}(v) = B^{\CAP}_{\Delta}(w)$.
\end{corollary}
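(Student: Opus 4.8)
The plan is to specialize Proposition~\ref{pr:ballShape} and show that, once $\Delta$ is small, the index set $\Theta_\Delta$ collapses to exactly the sensitive levels $\cS$. Proposition~\ref{pr:ballShape} already writes the CAP as $\bigcup_{\theta \in \Theta_\Delta} B^{\Plus}_{\Delta_\theta}(\CF(v,\theta))$, where $\Theta_\Delta$ collects the categorical configurations $\theta \in \cZ$ lying within the semi-latent ball of radius $\Delta$ around $T(v)$. Writing $z_v$ for the categorical part of $T(v)$, a configuration $\theta$ belongs to $\Theta_\Delta$ precisely when $d_\cZ(\theta, z_v) \le \Delta$, since the continuous coordinates can always be matched to those of $v$ to minimize the product distance. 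The whole argument therefore reduces to understanding $d_\cZ$.

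First I would split the categorical metric along the product structure of $\cZ$ into the sensitive coordinate and the remaining categorical coordinates. Because $\bS$ is a protected feature, Definition~\ref{def:protected} forces $d_\cS(s,s') = 0$ for all $s,s' \in \cS$, so the sensitive coordinate contributes nothing to $d_\cZ$. Because the other categorical variables are not partially protected, each of their finitely many distinct levels is separated by a strictly positive distance; let $\Delta_0$ be half the smallest such positive distance, which exists and is positive by finiteness. For any product metric (e.g.\ $L_p$) the total categorical distance dominates each coordinate distance, so if $\theta$ disagrees with $z_v$ in any non-sensitive categorical coordinate then $d_\cZ(\theta, z_v) \ge 2\Delta_0 > \Delta$ whenever $\Delta \le \Delta_0$. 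Hence, for such $\Delta$, every $\theta \in \Theta_\Delta$ must agree with $z_v$ on all non-sensitive categorical coordinates while being free to take any value $s \in \cS$ in the sensitive coordinate.

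It remains to identify these surviving configurations with the counterfactual twins and to check the radii. A configuration $\theta$ that keeps the non-sensitive categorical part of $v$ fixed and sets $\bS \hi s$ coincides with $do(\bS \hi s)$, so $\CF(v,\theta) = \ddot{v}_s$ by the counterfactual map of Eq.~\ref{eq:doing}; moreover $d_\cZ(\theta, z_v) = 0$ gives $\Delta_\theta = \Delta$ (for the $L_2$ metric, $\Delta_\theta = \sqrt{\Delta^2 - 0} = \Delta$). Substituting into Proposition~\ref{pr:ballShape} yields $B^{\CAP}_\Delta(v) = \bigcup_{s \in \cS} B^{\Plus}_\Delta(\ddot{v}_s)$, which is Eq.~\ref{eq:decompose}. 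For the final claim I would use that the masking operator $\odot_I$ is idempotent on the sensitive index set: if $w = \ddot{v}_{s'}$ then $T(w) = T(v) \odot_I s'$ agrees with $T(v)$ off the sensitive coordinate, so $\ddot{w}_s = T^{-1}\big((T(v)\odot_I s')\odot_I s\big) = T^{-1}(T(v)\odot_I s) = \ddot{v}_s$ for every $s$. Thus $v$ and $w$ share the same twin set, and their CAP decompositions coincide.

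The step I expect to be the main obstacle is pinning down $\Delta_0$ cleanly and verifying that the product metric really lets a differing non-sensitive categorical coordinate push the total distance above $\Delta$; this is where the finiteness of categorical levels and the not-partially-protected hypothesis are both essential, and where one must be careful that the continuous coordinates cannot compensate, which they cannot since distances are non-negative and the product metric is coordinatewise monotone.
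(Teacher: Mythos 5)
Your proposal is correct and takes essentially the same route as the paper's proof: both specialize Proposition~\ref{pr:ballShape}, use finiteness of the categorical levels to extract a positive threshold $\Delta_0$ below which $\Theta_{\Delta}$ collapses to configurations differing from $T(v)$ only in the sensitive coordinate, and conclude $\Delta_{\theta}=\Delta$ from the vanishing pseudometric $d_{\cS}$ on the protected feature. Two minor points in your favor: choosing $\Delta_0$ as \emph{half} the minimal positive categorical distance correctly handles the closed-ball boundary case $\Delta=\Delta_0$ (the paper's choice $\Delta_0=\min_{z,z'} d_{\cZ}(z,z')$ would admit a $\theta$ at distance exactly $\Delta_0$), and you explicitly verify the final twin-invariance claim $B^{\CAP}_{\Delta}(v)=B^{\CAP}_{\Delta}(w)$ via idempotence of the masking operator $\odot_{I}$, a step the paper asserts without proof.
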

The CAP definition considers causal similarity in relation to counterfactuals. The subsequent lemma shows that a CAP with a diameter $0$ represents the set of twins.
\begin{corollary}
	\label{pr:twinDef}
    If $\bS$ is a protected feature and other categorical variables in $\mathcal{M}$ are not partially protected, the counterfactual twins correspond to the zero-radius CAP:
	$$\ddot{\mathbb{V}} = B^{\CAP}_{0}(v) \coloneqq \lim_{\Delta \rightarrow 0} B^{\CAP}_{\Delta}(v)$$
\end{corollary}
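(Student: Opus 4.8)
The plan is to deduce the statement directly from Corollary~\ref{pr:decompose}, which already supplies, for every $\Delta \le \Delta_0$, the decomposition $B^{\CAP}_{\Delta}(v)=\bigcup_{s\in\cS}B^{\Plus}_{\Delta}(\ddot v_s)$. Since the fair-metric ball $\{w:d_{\textit{fair}}(v,w)\le\Delta\}$ increases monotonically with $\Delta$, the family $\{B^{\CAP}_{\Delta}(v)\}_{\Delta>0}$ is nested and shrinks as $\Delta\downarrow0$; accordingly I read the set-limit in the statement as the intersection $\lim_{\Delta\to0}B^{\CAP}_{\Delta}(v)=\bigcap_{0<\Delta\le\Delta_0}B^{\CAP}_{\Delta}(v)$, so that the task reduces to showing this intersection equals $\ddot{\bv}=\{\ddot v_s:s\in\cS\}$.

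After substituting the decomposition, the $\supseteq$ inclusion is immediate: each center $\ddot v_s$ satisfies $\ddot v_s\in B^{\Plus}_{\Delta}(\ddot v_s)$ for all $\Delta\ge0$ (the zero continuous perturbation is always admissible), so $\ddot{\bv}$ lies in every member of the intersected family. The substance is the $\subseteq$ inclusion, and here I expect the main obstacle, since an intersection cannot in general be exchanged with a union. The resolution is to exploit that $\cS$ is finite: given $w$ in the intersection, for each $\Delta$ there is some $s(\Delta)\in\cS$ with $w\in B^{\Plus}_{\Delta}(\ddot v_{s(\Delta)})$, so choosing a sequence $\Delta_n\downarrow0$ and invoking the pigeonhole principle produces a single $s^{\star}$ with $w\in B^{\Plus}_{\Delta_n}(\ddot v_{s^{\star}})$ for infinitely many $n$.

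To close, I would use that $B^{\Plus}_{\Delta}(\ddot v_{s^{\star}})$ perturbs only the continuous coordinates inside the closed disk $D^{\cX}_{\Delta}$ of radius $\Delta$, that these disks are nested with $\bigcap_{\Delta>0}D^{\cX}_{\Delta}=\{0\}$, and hence $\bigcap_n B^{\Plus}_{\Delta_n}(\ddot v_{s^{\star}})=\{\ddot v_{s^{\star}}\}$; this forces $w=\ddot v_{s^{\star}}\in\ddot{\bv}$ and establishes $\subseteq$. Combining the two inclusions yields $\lim_{\Delta\to0}B^{\CAP}_{\Delta}(v)=\ddot{\bv}$. As an independent cross-check I would verify the identity intrinsically: because $\bS$ is protected its pseudometric vanishes identically while every other coordinate carries a genuine metric (no other variable being partially protected), so $d_{\textit{fair}}(v,w)=0$ forces the semi-latent coordinates to coincide off the sensitive index set and to be free on it; by the counterfactual identity $\CF(v,do(\bS \hi s))=T^{-1}(T(v)\odot_{I}s)$ this zero-distance set is exactly the twins, confirming that the $\Delta\to0$ limit and the literal zero-radius ball agree.
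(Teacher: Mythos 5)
Your proposal is correct and takes essentially the same route as the paper's own proof: both invoke Corollary~\ref{pr:decompose} to write $B^{\CAP}_{\Delta}(v)=\bigcup_{s\in\cS}B^{\Plus}_{\Delta}(\ddot{v}_s)$ for small $\Delta$ and then let each ball shrink to its center $\ddot{v}_s$ as $\Delta\to 0$. The only difference is rigor, in your favor: the paper silently exchanges the limit with the union, whereas you make the interchange precise by reading the limit as a nested intersection and using finiteness of $\cS$ via pigeonhole (which, together with monotonicity of the balls in $\Delta$, does yield membership for all $\Delta$, not just a subsequence) — a gap worth closing, but not a different proof.
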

\subsection{CAPI Fairness}
\label{sec:IF}
This section presents our innovative concept of causal individual fairness, denoted as \textbf{CAPI} fairness. Within the Lipschitz formulation of IF, we introduce the metric $d_{\textit{fair}}$ as a measure in the feature space:
$$d_{\mathcal{Y}}(h(v), h(w)) \leq d_{\textit{fair}}(v, w)$$
Here, $d_{\mathcal{Y}}$ represents the metric applied in the outcome space. By incorporating a fair metric as a similarity function, individual fairness now encompasses both the causal structure and the sensitive protected feature.
\begin{proposition}
\label{pr:notions}
CAPI Fairness implies both Counterfactual Fairness and Adversarial Robustness:
\begin{align*}
\textbf{CAPI Fairness} \Rightarrow \textbf{Counterfactual Fairness} \\
\textbf{CAPI Fairness} \Rightarrow \textbf{Adversarial Robustness}
\end{align*}
However, the inverse statements are not necessarily true.
\end{proposition}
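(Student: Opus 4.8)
The plan is to establish the two forward implications directly from the $1$-Lipschitz form of CAPI fairness, $d_{\cY}(h(v),h(w)) \le d_{\textit{fair}}(v,w)$, and then to refute each converse with a classifier that satisfies the weaker notion while failing the other ``half'' of CAPI. For CAPI $\Rightarrow$ Counterfactual Fairness, I would apply the defining inequality to each twin pair $w=\ddot{v}_s=\CF(v,do(\bS\hi s))$. The crux is to show $d_{\textit{fair}}(v,\ddot{v}_s)=0$. By Eq.~\eqref{eq:doing}, $T(\ddot{v}_s)=T(v)\odot_I s$, so in the semi-latent space the points $T(v)$ and $T(\ddot{v}_s)$ agree on every non-sensitive coordinate and differ only on the protected coordinate. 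Since $d_{\cQ}$ is the product of the coordinate (pseudo)metrics and $\bS$ is protected (Definition~\ref{def:protected} gives $d_{\cS}(s,s')=0$ for all levels), the product collapses to zero, whence $d_{\textit{fair}}(v,\ddot{v}_s)=d_{\cQ}(T(v),T(\ddot{v}_s))=0$. This forces $d_{\cY}(h(v),h(\ddot{v}_s))=0$ for every $s$; taking the maximum over $s$ and the expectation over $\bP_{\V}$ yields counterfactual fairness with $\epsilon=0$.

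For CAPI $\Rightarrow$ Adversarial Robustness I would simply restrict the Lipschitz inequality to the perturbation set: any $w\in B^{\CAP}_{\Delta}(v)$ satisfies $d_{\textit{fair}}(v,w)\le\Delta$ by Definition~\ref{def:cap}, so $d_{\cY}(h(v),h(w))\le\Delta$. Thus the worst-case output deviation over the CAP ball is bounded by $\Delta$, which is precisely the stability property underlying the adversarial objective in Eq.~\eqref{eq:ARL}, controlling the loss uniformly over $B^{\CAP}_{\Delta}(v)$ through its dependence on the prediction.

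For the two converses I would exhibit counterexamples exploiting that each weaker notion controls only one aspect. To see that Counterfactual Fairness $\not\Rightarrow$ CAPI, I would take a classifier depending only on the non-sensitive features (hence identical on all twins, so counterfactually fair with $\epsilon=0$) but steep in a continuous non-sensitive coordinate: then there exist $v,w$ with arbitrarily small $d_{\textit{fair}}(v,w)$ yet a large gap $d_{\cY}(h(v),h(w))$, violating the Lipschitz bound. To see that Adversarial Robustness $\not\Rightarrow$ CAPI, I would take a classifier that is stable to small perturbations but genuinely uses $\bS$: it can be robust while failing counterfactual fairness, and since we have just shown CAPI $\Rightarrow$ counterfactual fairness, such a classifier cannot be CAPI.

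The main obstacle I anticipate is the first forward step, namely pinning down $d_{\textit{fair}}(v,\ddot{v}_s)=0$ rigorously. This hinges on three facts that must be assembled carefully: that the counterfactual map acts as the masking operator $\odot_I$ in the semi-latent space, that $d_{\cQ}$ is a genuine product metric so a single zero-distance coordinate annihilates the whole distance, and that $\bS$ is \emph{fully} protected rather than merely partially protected (otherwise only those twins whose sensitive levels are at zero pseudo-distance are collapsed, and counterfactual fairness follows only with an $\epsilon$ governed by the largest sensitive pseudo-distance). A secondary issue is making ``adversarial robustness'' precise: since Eq.~\eqref{eq:ARL} is phrased as a training objective, I would fix its meaning as uniform output stability over $B^{\CAP}_{\Delta}(v)$, so that both the implication and its failure in the converse are stated against the same definition.
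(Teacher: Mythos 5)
Your proposal is correct and follows essentially the same route as the paper's own proof: counterfactual fairness via the fact that twins sit at zero fair distance (the paper simply asserts $d_{\cV}(v,\ddot{v}_s)=0$ ``by definition,'' whereas you derive it from Eq.~\eqref{eq:doing} and the collapse of the product pseudometric, correctly flagging that $\bS$ must be fully rather than partially protected), and adversarial robustness via restricting the Lipschitz bound to the perturbation set, which the paper routes through Corollary~\ref{pr:decompose}. Your converse counterexamples match the paper's in spirit as well --- its explicit SCM with $\omega=(0,1,1)$ is exactly a classifier that ignores $\bS$ yet fails CAPI near its decision boundary, and for Adversarial Robustness $\not\Rightarrow$ CAPI the paper merely declares the claim straightforward, which your argument via the already-established first implication cleanly completes.
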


\section{Fair Classifier}
\label{sec:fair_classifier}
In this section, we will initially explore the origins of unfairness concerning IF in the context of an SCM. Following that, we will introduce IF classifiers based on CAPI fairness.	
\subsection{Unfair Area}
\label{sec:UA}
To analyze the bottlenecks in designing fair classifiers, we should understand the origins of unfairness.
We begin by defining unfair areas for CAPI fairness, inspired by \citet{ehyaei2023robustness}.
\begin{definition}[Unfair Area]
Let $\mathcal{M}$ denote an SCM, $\Delta$ diameter of CAP, and $h$ be a binary classifier operating on $\V$. The unfair area includes instances where the CAPI fairness property is not met:
\begin{equation}
	A_{\Delta}^{\neq} := \{v \in \cV: \exists v' \in B^{\CAP}_{\Delta}(v) \quad \text{s.t.} \quad h(v) \neq h(v')\}
\end{equation}
\end{definition}
To understand the shape of the unfair area, we aim to determine $A_{\Delta}^{\neq}$ assuming linear SCMs and classifiers (see Fig.~\ref{fig:II}).
\begin{figure*}[h]
    \centering
    \includegraphics[width=0.4\textwidth]{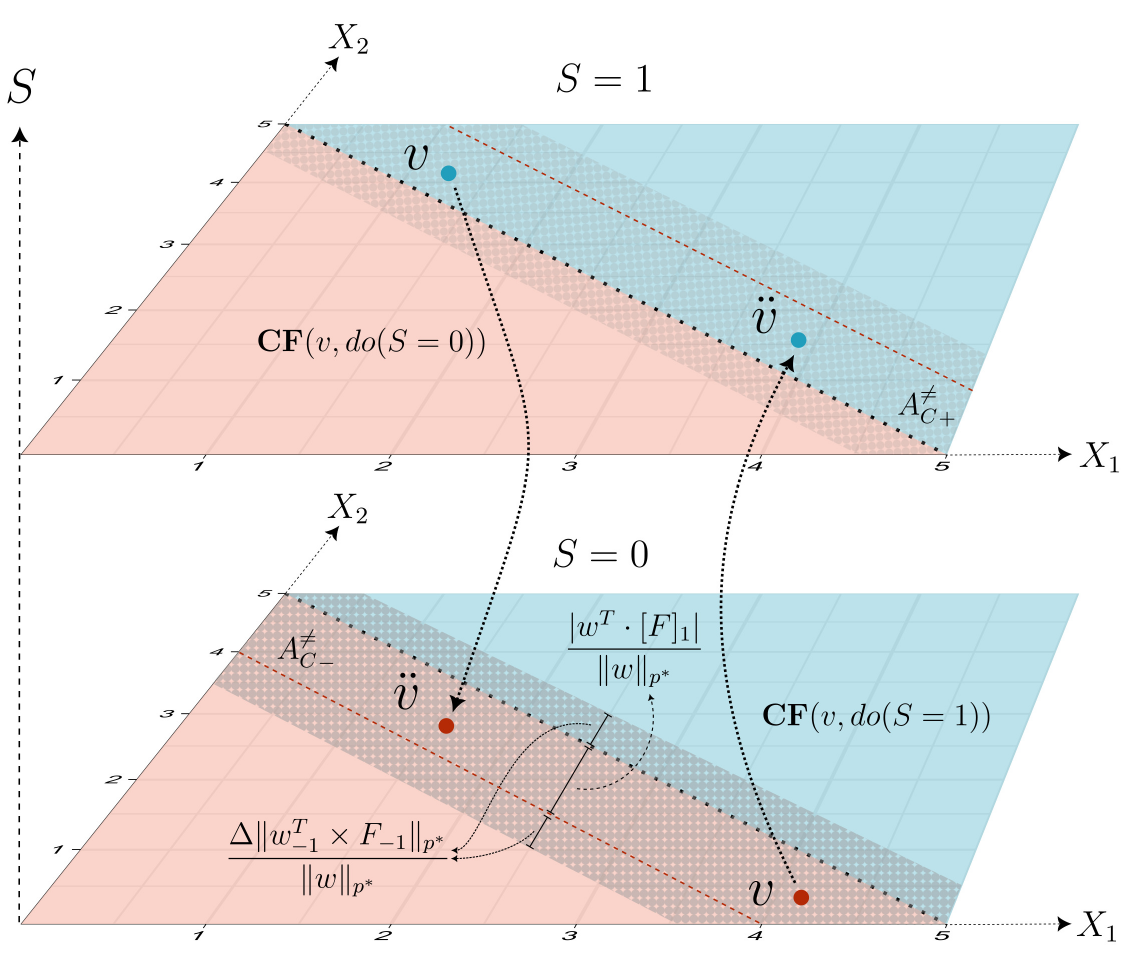}
    \hfill
    \includegraphics[width=0.4\textwidth]{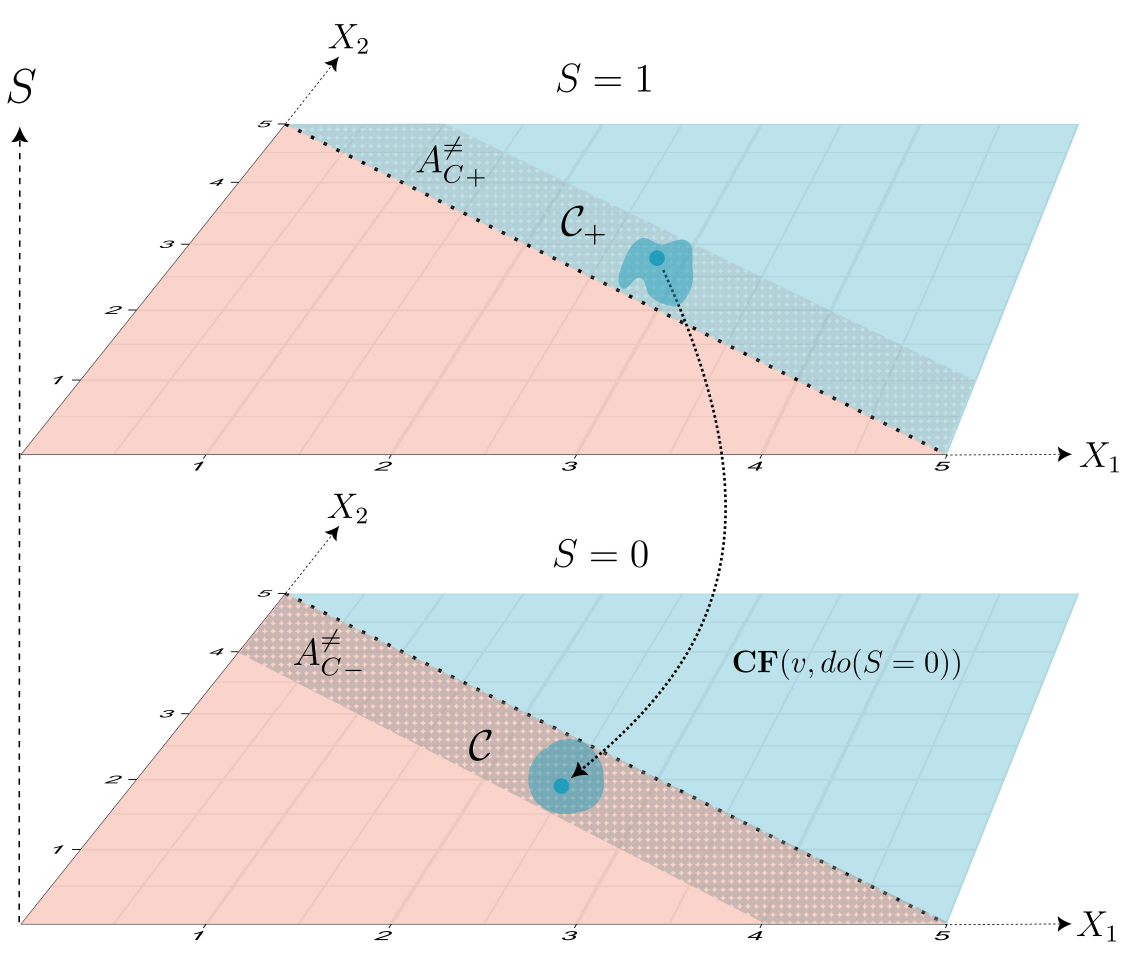}
    \caption{The unfair area for linear SCM and classifier consists of two parts: counterfactual and adversarial robustness (left) The counterfactual fairness mitigation idea is based on the property that the twin of the  twin is equal to the instance $v$ (right).}
    \label{fig:II}
\end{figure*}
\begin{proposition}
\label{pr:unfairArea}
Consider a linear SCM with a binary linear classifier $h(v) = \text{sign}(w^T \cdot v - b)$, where $w \in \mathbb{R}^n$. 
Assume $\mathcal{M}$ has one binary sensitive attribute $S \in \{0,1\}$ and other features $X$ are continuous. Without loss of generality, let $V_1$ represent the sensitive attribute. The unfair area for counterfactual fairness is delineated as follows:
\begin{equation*}
\label{eq:unfairCEq}
\begin{aligned}
A_{C}^{\neq} = \{v = (s, x)\in \mathcal{V} : & \quad \mathrm{sign}((s-(1-s))*h(v)) \geq 0 \quad \land \\ & dist(x,L) \leq   \frac{|w^T\bigcdot [F]_1|}{\|w\|_{p^*}} \}
\end{aligned}
\end{equation*}
The unfair area $A_{\Delta}^{\neq}$ is defined as the band parallel to the classifier boundary $L$:
\begin{equation*}
\label{eq:unfairEq}
A_{\Delta}^{\neq} = \{v = (s, x)\in \mathcal{V} : dist(x,A_{C}^{\neq}) \leq   \frac{\Delta\|w^T_{-1} \times F_{-1} \|_{p^*}}{\|w\|_{p^*}} \}
\end{equation*}
Here, $L$ denotes the decision boundary of the classifier, while $w_{-1}$ and $F_{-1}$ represent the continuous components of $w$ and $F$, respectively.
\end{proposition}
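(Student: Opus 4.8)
The plan is to collapse the problem onto the scalar margin $g(v) := w^T v - b$, since $h(v) = \mathrm{sign}(g(v))$, and to track how each of the two building blocks of the CAP moves this margin. First I would use the linear reduced form $\V = F\U$ of the ANM together with the counterfactual map of Eq.~\ref{eq:doing}: intervening by $do(S := s')$ holds all non-sensitive noises fixed and, because $S = V_1$ is a root, propagates only along directed paths out of node $1$. This yields the closed form $\ddot{v}_{s'} = v + (s'-s)\,[F]_1$ for the twin, so that the flip shifts the margin by exactly $g(\ddot{v}_{s'}) - g(v) = (s'-s)\,w^T\bigcdot[F]_1$. For a binary flip $|s'-s| = 1$, so the shift has fixed magnitude $c := |w^T\bigcdot[F]_1|$.

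Next I would characterize $A_C^{\neq}$. An instance and its flipped twin receive different labels precisely when the shift carries $g$ across $0$, i.e.\ when $g(v)$ and the shifted margin have opposite signs. Given $|s'-s|=1$ this happens iff $|g(v)| \le c$ and the sign of $g(v)$ is compatible with the direction of the shift; after normalizing the orientation so that $w^T\bigcdot[F]_1 > 0$, the directional constraint is exactly $\mathrm{sign}((s-(1-s))\,h(v)) \ge 0$, pairing each sensitive level with the side of $L$ from which its twin crosses. Converting the scalar condition $|g(v)| \le c$ into a geometric one via the standard $\ell_p$-to-hyperplane distance identity $dist(v,L) = |g(v)|/\|w\|_{p^*}$ (with $p^*$ the Hölder conjugate) gives the stated band $dist(x,L) \le |w^T\bigcdot[F]_1|/\|w\|_{p^*}$.

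For the full $A_{\Delta}^{\neq}$ I would invoke Corollary~\ref{pr:decompose}, which (for $\Delta$ below the threshold $\Delta_0$) writes $B^{\CAP}_{\Delta}(v)$ as the union $\bigcup_s B^{\Plus}_{\Delta}(\ddot{v}_s)$ of continuous balls around the two twins. A continuous perturbation of radius $\Delta$ in the semi-latent space maps, under the linear $T^{-1}$, to the feature displacement $F_{-1}\delta$; its maximal effect on the margin is therefore $\max_{\|\delta\|\le\Delta} |w_{-1}^T F_{-1}\delta| = \Delta\,\|w^T_{-1} \times F_{-1}\|_{p^*}$, again by duality. Writing $r := \Delta\,\|w^T_{-1} \times F_{-1}\|_{p^*}$, the range of $g$ over $B^{\Plus}_{\Delta}$ around a point is an interval of half-width $r$ centered at that point's margin. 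Taking the union over $v$ itself (margin $g(v)$) and its flip (margin $g(v)+\sigma$ with $|\sigma|=c$), and asking whether this union crosses $0$, gives after a short case split on the sign of $g(v)$ the condition that $v$ lie within distance $r/\|w\|_{p^*}$ of $A_C^{\neq}$, i.e.\ $dist(x,A_C^{\neq}) \le r/\|w\|_{p^*}$.

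The step I expect to be the main obstacle is showing that this last union is \emph{exactly} the $r$-thickening of $A_C^{\neq}$ rather than merely contained in it. The clean identity relies on the fact that $A_C^{\neq}$ reaches all the way to the decision boundary $L$ (its near edge sits at $g = 0$), so that thickening it by $r$ simultaneously reproduces the ``perturb-the-twin'' contribution on the counterfactual side and the pure ``perturb-$v$'' robustness band on the opposite side; verifying that the two one-sided intervals glue together without gap or overlap is where the sign bookkeeping from the second step must be handled carefully. The remaining points—the two norm-duality computations and confirming that the orientation normalization is without loss of generality—are routine.
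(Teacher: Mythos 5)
Your proposal is correct, and it is worth noting how it differs from the paper's own proof, which is almost entirely citation-based: the paper invokes Corollary~\ref{pr:decompose} for the decomposition $B^{\CAP}_{\Delta}(v)=\bigcup_{s} B^{\Plus}_{\Delta}(\ddot{v}_s)$ and then simply imports the counterfactual band $A_{C}^{\neq}$ from Proposition 3.5 and the buffering radius $\Delta\|w^T_{-1}\times F_{-1}\|_{p^*}/\|w\|_{p^*}$ from Proposition 3.7 of \citet{ehyaei2023robustness}, supplying no further argument. You follow the same two-stage skeleton (counterfactual band first, then thicken by the continuous-perturbation radius), but you derive the imported ingredients from first principles: the closed form $\ddot{v}_{s'}=v+(s'-s)\,[F]_1$ for the twin (which the paper itself only uses later, in the proof of Corollary~\ref{pr:linfair}, again via citation; note your root-node assumption on $S$ is harmless, since in a DAG no path leaving node $1$ re-enters it, so the formula holds for any linear SCM), the margin-shift magnitude $|w^T\bigcdot[F]_1|$, the two H\"older-duality computations converting margin widths into $\ell_p$ distances, and---most valuably---the exactness of the final identity: that the union of the robustness band around $v$ and the shifted band around its twin equals precisely the $r$-thickening of $A_{C}^{\neq}$, which holds because $A_{C}^{\neq}$ abuts the decision boundary at margin $0$, so the two one-sided intervals glue with neither gap nor slack; your case split on the sign of the margin, including the regime $r>c$ where the twin-side band is absorbed into the robustness band, checks out. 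You are also more careful than the paper's statement on one point: the condition $\mathrm{sign}((s-(1-s))\,h(v))\geq 0$ is only correct under an orientation normalization such as $w^T\bigcdot[F]_1>0$, which you make explicit and the paper leaves implicit. What the paper's route buys is brevity and consistency with the prior framework; what yours buys is a self-contained, verifiable argument in which the one genuinely delicate step---exactness of the thickening in the combined CAP setting---is identified and resolved rather than delegated.
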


According to Prop.~\ref{pr:unfairArea}, a straightforward condition can be derived for ensuring counterfactual fairness.
\begin{corollary}
	\label{pr:linfair}
    Considering the condition in Prop.~\ref{pr:unfairArea}, achieving a counterfactually fair classifier for $\mathcal{M}$ is impossible unless $F$ and $w$ satisfy the equation $w^T \cdot [F]_{1} = 0$. This implies that the classifier $h$ relies solely on a subset of variables that are non-descendants of $\bS$ in $\mathcal{M}$.
\end{corollary}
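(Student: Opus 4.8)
The plan is to read the condition straight off the characterization of the counterfactual unfair area in Prop.~\ref{pr:unfairArea}, and then to translate the resulting orthogonality into a causal statement about the classifier's weights. First I would note that a linear classifier is counterfactually fair precisely when almost no instance is separated from its counterfactual twin by the decision boundary, i.e.\ when $A_C^{\neq}$ is empty up to a null set; this is the $\Delta \to 0$ (twin-only) specialization of the CAP unfair area, consistent with Corollary~\ref{pr:twinDef}. By Prop.~\ref{pr:unfairArea}, $A_C^{\neq}$ is exactly the slab of points lying within distance $\frac{|w^T\bigcdot [F]_1|}{\|w\|_{p^*}}$ of the boundary $L$ on the side picked out by the sign condition, so the whole question reduces to when this slab degenerates.

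Assuming a nontrivial classifier, so that $w \neq 0$ and $\|w\|_{p^*} > 0$ (the conjugate exponent $p^*$ enters only through this positive normalizer), the slab has positive measure iff its half-width $\frac{|w^T [F]_1|}{\|w\|_{p^*}}$ is strictly positive: any positive width leaves room for points of $\cV$ on the prescribed side, each furnishing a twin across $L$ and hence a counterfactual violation. Therefore $A_C^{\neq}$ is null iff $|w^T [F]_1| = 0$, i.e.\ $w^T \cdot [F]_1 = 0$; contrapositively, $w^T [F]_1 \neq 0$ yields a nondegenerate slab and rules out counterfactual fairness. This establishes the stated impossibility ``unless $w^T \cdot [F]_1 = 0$.''

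It remains to interpret the condition causally. In a linear SCM $\V = F\U$, so $[F]_1$ is the column encoding the total effect of the latent source of $\bS$ on each variable, and $[F]_{i,1} = 0$ whenever no directed path runs from $\bS$ to $V_i$; thus the support of $[F]_1$ is $\{V_1\}$ together with the descendants of $\bS$. Since the twin displacement $\ddot{v}_s - v$ is proportional to $[F]_1$, the equation $w^T [F]_1 = 0$ states exactly that $w^T v$ does not move along the direction in which interventions on $\bS$ transport an instance. Because $[F]_1$ is supported on the descendants of $\bS$, any weight vector $w$ supported on the complementary set of non-descendants satisfies the constraint automatically, which is the sense in which a fair $h$ ``relies solely on non-descendants of $\bS$.''

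The main obstacle is this final translation. The orthogonality $w^T [F]_1 = 0$ is strictly weaker than ``$w$ is supported on non-descendants'': it can also be met by cancellation among descendant coordinates for a particular $F$. I would therefore prove the sharp statement at the level the geometry supplies --- that $h$ must be insensitive to the causal pathway out of $\bS$ --- and present dependence on non-descendants only as the canonical, coefficient-robust family of solutions rather than a strict equivalence, matching the sufficient condition of \citet{kusner2017counterfactual}. The remaining care points are minor: excluding the constant classifier $w=0$, and checking that the sign constraint in $A_C^{\neq}$ cannot empty a slab of positive width, since it only selects one side of $L$.
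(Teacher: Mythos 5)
Your proof is correct and arrives at the same algebraic criterion, but the converse direction travels a slightly different road than the paper's. The paper handles the forward direction exactly as you do (zero slab width in Prop.~\ref{pr:unfairArea} empties $A_{C}^{\neq}$), but for the converse it does not reason about the measure of the slab at all: it invokes the explicit twin formula from Corollary~3.3 of \citet{ehyaei2023robustness}, $\ddot{v}_{s'} = v + (s'-s)\cdot [F]_1$, and argues pointwise that $\mathrm{sign}(w^T v + (s'-s)\,w^T [F]_1 - b) = \mathrm{sign}(w^T v - b)$ for \emph{all} $v$ forces $(s'-s)\,w^T [F]_1 = 0$. Your measure-theoretic degeneracy argument (positive half-width $\Rightarrow$ positive-mass slab $\Rightarrow$ a positive-probability set of counterfactual violations) is equivalent in substance --- the slab width in Prop.~\ref{pr:unfairArea} is nothing but $|w^T[F]_1|$ rescaled --- but it quietly requires that $\bP_{\V}$ assign positive mass to every open slab, which you should state explicitly; it holds here because the ANM noise is Gaussian, so the conditional law of the continuous features has full support. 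The paper's pointwise route needs no support assumption, which is what it buys; your route stays entirely inside the unfair-area geometry and never needs the twin formula as a separate ingredient, which is arguably cleaner given that Prop.~\ref{pr:unfairArea} is already proved. Finally, your caveat on the causal reading is well taken and in fact more careful than the source: the paper's proof stops at $w^T[F]_1 = 0$ and asserts the non-descendant interpretation without argument, whereas you correctly observe that orthogonality admits cancellations among descendant coordinates for special $F$, so ``$h$ depends only on non-descendants of $\bS$'' is the canonical (coefficient-robust) solution family rather than a strict equivalence. Keep your two housekeeping checks ($w \neq 0$, and that the one-sided sign constraint cannot annihilate a positive-width slab under full support); with those stated, the proposal is a complete and valid proof.
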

In assessing CAPI unfairness, a meaningful indicator is the probability associated with the unfair area in the trained classifier. 
\begin{definition}[Unfair Area Indicator (UAI)]
\label{def:uai}
Let $\mathcal{M}$ be the SCM with parameters denoted by $\langle \mathcal{G}, \V, \U, \mathbb{F}, \mathbb{P}_{\U}\rangle$, and $\hat{h}$ be the trained binary classifier. 
The probability $\mathbb{P}_{\mathcal{V}}(A_{\Delta}^{\neq})$, referred to as the Unfair Area Indicator, quantifies the likelihood of the CAPI unfairness for $\hat{h}$.
\end{definition}
Taking into account the concept of unfair areas and the inherent property that a twin's twin is an identity function, we present a post-processing technique involving label-flipping. This method aims to mitigate counterfactual fairness issues.
\begin{proposition}[Counterfactual Unfairness Mitigation]
\label{pr:cum}
Let ${A_{C}^{\neq}}_+$ and ${A_{C}^{\neq}}_-$ represent the positive and negative regions of counterfactual unfairness $A_{C}^{\neq}$, respectively (see Fig.~\ref{fig:II}). Assuming $\mathcal{C} \subset {A_{C}^{\neq}}_-$, the unfair area mitigation method involves flipping the labels of instances in $\mathcal{C}$ to positive. By changing labels, the reduction in unfairness area is given by:
\begin{equation}
\mathbb{P}_{\mathcal{V}}(A_{C}^{\neq})-\mathbb{P}_{\mathcal{V}}(\mathcal{C})-\mathbb{P}_{\mathcal{V}}(\mathcal{C}_+)
\end{equation}
Here, $\mathcal{C}_+$ is a subset of ${A_{\Delta}^{\neq}}_+$, representing the points in ${A_{\Delta}^{\neq}}_+$ whose corresponding twins belong to the set $\mathcal{C}$. If we set $\mathcal{C} = {A_{C}^{\neq}}_-$, complete mitigation of counterfactual fairness can be achieved.
\end{proposition}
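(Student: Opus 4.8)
The plan is to reduce the claim to a clean combinatorial statement about the twin map and then track how the label flip acts on twin pairs. First I would use the definition of the unfair area together with Corollary~\ref{pr:twinDef}, which identifies the zero-radius CAP with the set of counterfactual twins $\ddot{\mathbb{V}}$, to rewrite the counterfactual unfair region as
$$A_{C}^{\neq} = \{v \in \cV : h(v) \neq h(\ddot v)\},$$
where $\ddot v$ denotes the twin of $v$ under $do(\bS \hi 1-s)$ (for binary $\bS$ the set $\ddot{\mathbb{V}}$ is just $\{v,\ddot v\}$). The decisive structural fact, highlighted in the right panel of Fig.~\ref{fig:II}, is that the twin map $v \mapsto \ddot v$ is an involution, $\ddot{\ddot v} = v$. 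Consequently it maps $A_{C}^{\neq}$ bijectively onto itself, and since $v$ and $\ddot v$ carry opposite labels on this set it restricts to a bijection between the positive part ${A_{C}^{\neq}}_+$ and the negative part ${A_{C}^{\neq}}_-$. In particular the twins of $\mathcal{C} \subset {A_{C}^{\neq}}_-$ form exactly $\mathcal{C}_+ \subset {A_{C}^{\neq}}_+$, and these two sets are disjoint because they lie in the $h=-1$ and $h=+1$ regions respectively.

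Next I would introduce the post-processed classifier $h'$ that agrees with $h$ everywhere except $h'(v) = +1$ for $v \in \mathcal{C}$, and compute its counterfactual unfair area by a case analysis over twin pairs $\{v,\ddot v\}$. For a pair with $v \in \mathcal{C}$ (hence $\ddot v \in \mathcal{C}_+$, already labelled $+1$), flipping gives $h'(v)=h'(\ddot v)=+1$, so both members leave the unfair area; for a pair meeting neither $\mathcal{C}$ nor $\mathcal{C}_+$ no label changes, so its status is preserved. The step needing care — and the main obstacle — is confirming that flipping creates no \emph{fresh} violations: because every point has a unique twin and the twin map is an involution, each pair is self-contained, so altering labels inside $\mathcal{C}\cup\mathcal{C}_+$ cannot disturb the agreement of any pair disjoint from it, and a pair that was originally fair ($v\notin A_{C}^{\neq}$) can neither meet $\mathcal{C}\subset A_{C}^{\neq}$ nor have its twin in $\mathcal{C}$ (else $v\in\mathcal{C}_+\subset A_{C}^{\neq}$). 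This identifies the new counterfactual unfair area precisely as $A_{C}^{\neq}\setminus(\mathcal{C}\cup\mathcal{C}_+)$.

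Finally I would read off the probability. Using that $\mathcal{C}$ and $\mathcal{C}_+$ are disjoint subsets of $A_{C}^{\neq}$, additivity of $\mathbb{P}_{\mathcal{V}}$ yields the remaining unfair mass
$$\mathbb{P}_{\mathcal{V}}(A_{C}^{\neq}) - \mathbb{P}_{\mathcal{V}}(\mathcal{C}) - \mathbb{P}_{\mathcal{V}}(\mathcal{C}_+),$$
as stated. I would emphasize that no measure-preservation of the (affine) twin map is needed here, since the expression subtracts $\mathbb{P}_{\mathcal{V}}(\mathcal{C}_+)$ directly rather than equating it with $\mathbb{P}_{\mathcal{V}}(\mathcal{C})$. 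For the special case, taking $\mathcal{C} = {A_{C}^{\neq}}_-$ forces $\mathcal{C}_+ = {A_{C}^{\neq}}_+$ by the bijection, and since $A_{C}^{\neq} = {A_{C}^{\neq}}_+ \sqcup {A_{C}^{\neq}}_-$ the expression vanishes, giving complete counterfactual-fairness mitigation.
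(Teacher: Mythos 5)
Your proof is correct and takes essentially the same route as the paper's own (much terser) argument, which likewise flips the labels on $\mathcal{C}$ and observes that both $\mathcal{C}$ and its twin set $\mathcal{C}_+$ leave the unfair region because a twin's twin is the identity. Your additional steps --- the explicit involution giving a bijection between ${A_{C}^{\neq}}_+$ and ${A_{C}^{\neq}}_-$, the disjointness of $\mathcal{C}$ and $\mathcal{C}_+$ needed for additivity of $\mathbb{P}_{\mathcal{V}}$, the verification that flipping creates no fresh violations, and the remark that no measure-preservation of the twin map is required --- simply make rigorous what the paper leaves implicit.
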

\begin{remark}
The label-flipping direction ($+ \ to \ -$) does not inherently impact counterfactual unfairness mitigation. However, fairness considerations often involve a preferred direction. In such cases, flipping the sign of the unfair region in relation to this preferred direction can be employed to promote fairness.
\end{remark}
Label flipping alone is insufficient to remove CAPI unfairness. Therefore, in the next section, we introduce an additional in-processing method to mitigate unfairness.
\subsection{Causal Adversarial Learning}
\label{sec:CAL}
Fair adversarial learning aims to achieve high accuracy in predicting the target variable while ensuring fairness regarding sensitive attributes. This involves formulating a min-max optimization problem, where the model simultaneously minimizes the classification error and maximizes the adversarial loss~\cite{pessach2022review}. 
In previous chapters, the concept of CAP was discussed. Now, we formulate the objective function for Causal Adversarial Learning (CAL). Let $\mathcal{D} = \{(v_i, y_i)\}_{i = 1}^{n}$ represent the set of observations. The objective function to be minimized over the classifier space in CAL is as follows:
\begin{equation}
\label{eq:optimization}
\min_\psi {\bE}_{(v,y) \sim \cP_{\mathcal{D}}}[ \max_{w \in B^{\CAP}_{\Delta}(v)} \ell(h_\psi(w),y)]
\end{equation}
The optimization objective in Eq.~\ref{eq:optimization} promotes the proximity of values for $h$ within the neighborhood $B^{\CAP}_{\Delta}(v)$ to $h(v)$.
According to Lem.~\ref{pr:estimate}, we can establish the inequality
$f(v + \delta) \leq f(v) +  |\delta^T \nabla_v f(v)| + \gamma(\Delta, v)$.
By setting $f(v + \delta) = \ell(h(T^{-1}(T(v)+\delta),y)$, we can utilize Cor.~\ref{pr:decompose} to represent the expression within the expectation of Eq.~\ref{eq:optimization} as follows:
\begin{equation}
\label{eq:regularizer}
\begin{aligned}    
    & \max_{w \in B^{\CAP}_{\Delta}(v)}\ell(h(w),y) = 
    \max_{s \in \mathcal{S}} \max_{w \in B^{\Plus}_{\Delta}(\ddot{v}_s)}\ell(h(w), y) = \\
    & \max_{s \in \mathcal{S}} \max_{\delta \in D^{\mathcal{X}}_{\Delta}} \ell(h(T^{-1}(T(\ddot{v}_s)+\delta),y) \leq \\
    & \max_{s \in \mathcal{S}} \max_{\delta \in D^{\mathcal{X}}_{\Delta}} \ell(h(T^{-1}(T(\ddot{v}_s)),y) 
    + |\delta^T \nabla^{\mathcal{X}}_{\ddot{v}_s} f(\ddot{v}_s)| + \gamma(\Delta, \ddot{v}_s)
    \leq \\
    & \ell(h(v),y) + \max_{s \in \mathcal{S}}  \ell(h(\ddot{v}_s), y)
    + \\ & \max_{s \in \mathcal{S}} \max_{\delta \in D^{\mathcal{X}}_{\Delta}} |\delta^T \nabla^{\mathcal{X}}_{\ddot{v}_s} f(\ddot{v}_s)| + \gamma(\Delta, \ddot{v}_s) = \\
    & \ell(h(v),y) + \underset{II}{\underbrace{\max_{s \in \mathcal{S}}  \ell(h(\ddot{v}_s), y)}}
    + \max_{s \in \mathcal{S}} \underset{II}{\underbrace{(\| \nabla^{\mathcal{X}}_{\ddot{v}_s} f(\ddot{v}_s)\|_* + \gamma(\Delta, \ddot{v}_s))}}
\end{aligned}
\end{equation}
The symbol $\nabla^{\mathcal{X}}$ denotes the gradient operator for continuous features. The validity of the final equation can be established by bounding $|\delta^T \nabla^{\mathcal{X}}_{\ddot{v}_s}f(\ddot{v}_s)|$ using the dual norm $\|\nabla^{\mathcal{X}}_{\ddot{v}_s} f(\ddot{v}_s)\|_*$.

The adversarial loss function, as per Eq.~\ref{eq:regularizer}, comprises a regular loss function and a regularizer, which can be decomposed into two components. The first component addresses counterfactual fairness by capturing the discrepancy between the instance $y$ and the corresponding twins' classifier label. The second component measures the adversarial robustness of classifier $h$ regarding the continuous features surrounding each twin. 
Assuming random observations, the evaluation of the robustness property is narrowed down to the instance denoted as $v$. Hence, the reformulated expression for the regularizer can be stated as follows:

\begin{equation}
\label{eq:modifiedReg}
\begin{aligned}    
\mathcal{R}(v) =  \quad & \mu_1 * \max_{s \in \mathcal{S}} \ell(h(\ddot{v}_s),y) \quad + \\
& \mu_2 *\gamma(\Delta, v) + \mu_3 *\| \nabla^{\mathcal{X}}_{v} f(v)\|_* 
\end{aligned}
\end{equation}
where the hyperparameters $\mu_i \in \mathbb{R}$ determine the extent of regularization in the model.

\section{Numerical Experiments}
\label{sec:CE}
\begin{table*}[ht]
\centering
\fontsize{7}{8}\selectfont
\begin{tabular}{ l  c@{\hspace{4pt}} c@{\hspace{4pt}} c@{\hspace{4pt}} c@{\hspace{4pt}} | c@{\hspace{4pt}} c@{\hspace{4pt}} c@{\hspace{4pt}} c@{\hspace{4pt}} | c@{\hspace{4pt}} c@{\hspace{4pt}} c@{\hspace{4pt}} c@{\hspace{4pt}} | c@{\hspace{4pt}} c@{\hspace{4pt}} c@{\hspace{4pt}} c@{\hspace{4pt}} | c@{\hspace{4pt}} c@{\hspace{4pt}} c@{\hspace{4pt}} c@{\hspace{4pt}} | c@{\hspace{4pt}} c@{\hspace{4pt}} c@{\hspace{4pt}} c@{\hspace{4pt}}}
	\toprule
    & \multicolumn{8}{c}{\textbf{Real-World Data}} 
	& \multicolumn{16}{c}{\textbf{Synthetic Data}}
    \\
	\cmidrule(lr){2-9}  \cmidrule(lr){10-25} 
	& \multicolumn{4}{c}{\textbf{Adult}} 
	& \multicolumn{4}{c}{\textbf{COMPAS}}
	& \multicolumn{4}{c}{\textbf{IMF}}
	& \multicolumn{4}{c}{\textbf{LIN}}
    & \multicolumn{4}{c}{\textbf{Loan}}
    & \multicolumn{4}{c}{\textbf{NLM}}
	\\
	\cmidrule(lr){2-5} \cmidrule(lr){6-9} \cmidrule(lr){10-13} \cmidrule(lr){14-17} \cmidrule(lr){18-21} \cmidrule(lr){22-25} 
	\multirow{1}{*}{\textbf{Trainer}} & $A$
	& $U_{.05}$
	& $CF$
    & $\cR_{.05}$
	& $A$
	& $U_{.05}$
	& $CF$
    & $\cR_{.05}$
	& $A$
	& $U_{.05}$
	& $CF$
    & $\cR_{.05}$
	& $A$
	& $U_{.05}$
	& $CF$
    & $\cR_{.05}$
	& $A$
	& $U_{.05}$
	& $CF$
    & $\cR_{.05}$
	& $A$
	& $U_{.05}$
	& $CF$
    & $\cR_{.05}$ 
	\\
	\midrule
 AL & \textbf{0.80} & 0.22 & 0.18 & \textbf{0.04} & \textbf{0.68} & 0.18 & 0.14 & 0.04 & 0.63 & 0.30 & 0.28 & 0.11 & 0.59 & 0.90 & 0.90 & 0.26 & 0.81 & 0.27 & 0.27 & 0.16 & 0.57 & 0.55 & 0.53 & 0.37\\
CAL & 0.80 & 0.23 & 0.18 & 0.05 & 0.67 & 0.14 & 0.10 & 0.04 & 0.59 & 0.35 & 0.34 & 0.13 & 0.59 & 0.90 & 0.90 & 0.26 & 0.67 & 0.26 & 0.26 & 0.19 & 0.34 & 0.48 & 0.46 & 0.24\\
\textbf{CAPIFY} & 0.78 & \textbf{0.07} & \textbf{0.03} & 0.05 & 0.63 & \textbf{0.06} & \textbf{0.01} & 0.04 & \textbf{0.72} & \textbf{0.04} & \textbf{0.00} & 0.04 & 0.62 & \textbf{0.19} & \textbf{0.15} & \textbf{0.09} & 0.70 & \textbf{0.22} & \textbf{0.22} & 0.14 & 0.45 & \textbf{0.38} & \textbf{0.36} & 0.24\\
ERM & 0.80 & 0.21 & 0.18 & 0.04 & 0.68 & 0.21 & 0.17 & 0.04 & 0.72 & 0.05 & 0.02 & \textbf{0.04} & 0.68 & 0.44 & 0.43 & 0.18 & \textbf{0.85} & 0.35 & 0.35 & 0.23 & \textbf{0.71} & 0.57 & 0.54 & 0.41\\
LLR & 0.76 & 0.23 & 0.19 & 0.04 & 0.62 & 0.35 & 0.32 & \textbf{0.04} & 0.69 & 0.22 & 0.20 & 0.08 & 0.64 & 0.31 & 0.31 & 0.27 & 0.69 & 0.27 & 0.26 & \textbf{0.12} & 0.44 & 0.41 & 0.39 & \textbf{0.20}\\
ROSS & 0.68 & 0.22 & 0.08 & 0.11 & 0.62 & 0.22 & 0.16 & 0.06 & 0.72 & 0.05 & 0.02 & 0.04 & \textbf{0.69} & 0.47 & 0.44 & 0.11 & 0.83 & 0.38 & 0.38 & 0.27 & 0.70 & 0.55 & 0.52 & 0.43\\
	\midrule
\end{tabular}
\caption{
The table displays the outcomes of our numerical experiment, wherein different trainers are compared based on their input sets in terms of accuracy (A, higher values are better), CAPI fairness metrics ($U_{.05}$, lower values are better), Counterfactual Unfair area ($CF$, lower values are better), and the non-robust percentage concerning adversarial perturbation with radii $0.05$ ($\mathcal{R}_{.05}$, lower values are better). The best-performing techniques for each trainer, dataset, and metric are indicated in bold. The findings highlight that CAPIFY outperforms other trainers in reducing CAPI unfairness. The standard deviation average for CAPIFY is 0.028, whereas for the other methods, it is 0.038.}
\label{tab:sim}
\end{table*}

\begin{figure*}[ht]
\centering
\includegraphics[width=0.95\textwidth]{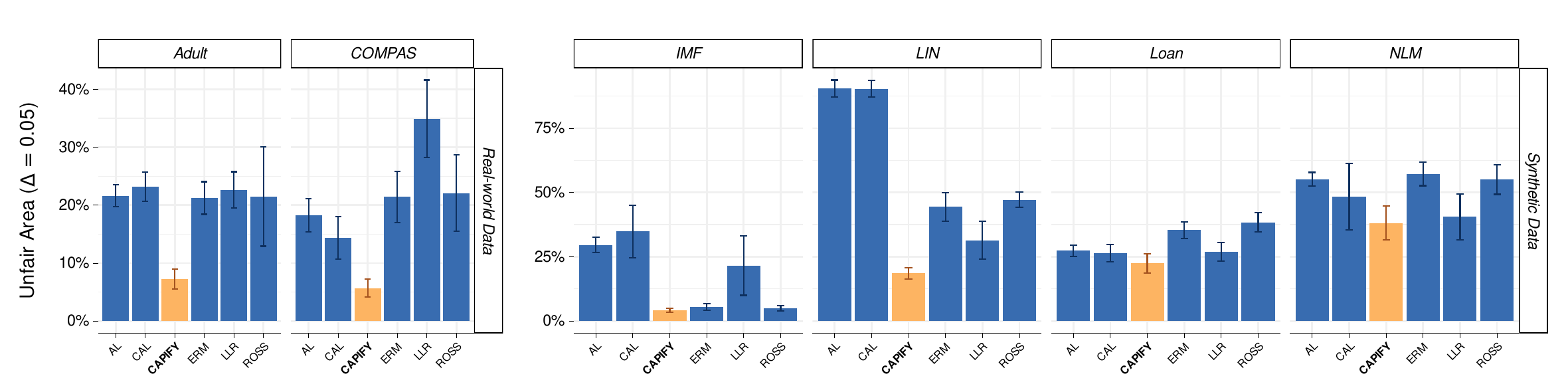}
\includegraphics[width=0.95\textwidth]{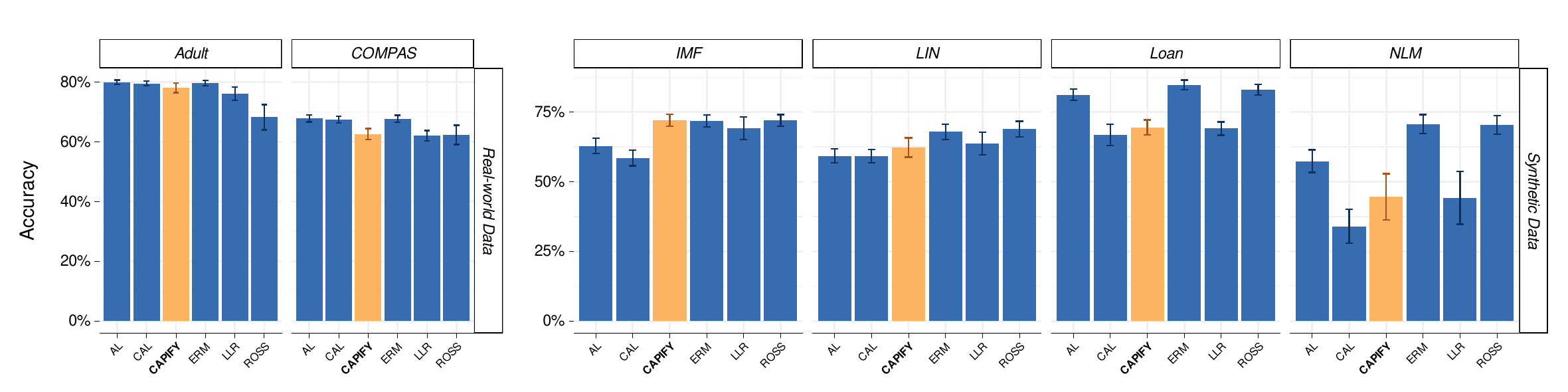}
\caption{Figure depicts our numerical experiment's results, showcasing diverse trainers and datasets to evaluate CAPIFY performance. The initial bar plot assesses trainer performance through UAI values (favoring lower values) at $\Delta = .05$. The subsequent bar plot contrasts methods based on prediction performance (favoring higher values). }
\label{fig:com_study}
\end{figure*}

In this study, we empirically validate the theoretical propositions presented in the paper. We assess the performance of the CAPIFY and CAL training methods in comparison to conventional empirical risk minimization (ERM) and other pertinent techniques, including Adversarial Learning (AL) \cite{madry2017towards}, Locally Linear Regularizer (LLR) training \cite{qin2019adversarial}, and Ross method \cite{ross2021learning}. Our experimentation involves real datasets, specifically Adult \cite{kohavi1996uci} and COMPAS \cite{washington2018argue}, which are pre-processed according to \cite{dominguez2022adversarial}. Furthermore, we consider three synthetic datasets related to Linear (LIN), Non-linear (NLM), and independent futures (IMF) SCMs, along with the semi-synthetic Loan dataset based on \cite{karimi2020algorithmic}.

We utilize a multi-layer perceptron with three hidden layers, each comprising 100 nodes, for the COMPAS, Adult, NLM, and Loan datasets. Logistic regression is employed for the remaining datasets. To evaluate classifier performance, we measure accuracy and Matthews correlation coefficient (MCC). Furthermore, we quantify CAPI fairness using UAI across various $\Delta$ values, including 0.05, 0.01, and 0.0. Additionally, we compute UAI for non-sensitive scenarios, employing $\Delta$ values of 0.05 and 0.01 to represent the non-robust data percentage. Additional comprehensive details about the computational experiments are available in the $\S$~\ref{app:simulation}.

We performed our experiment using 100 different seeds, and the results are presented in Tables \ref{tab:sim} and \ref{tab:sim2}. Figures \ref{fig:com_study} and \ref{fig:com_study_2} illustrate that the CAPIFY method exhibits a lower unfair area ($U_\Delta$) for $\Delta = 0.05$, $\Delta = 0.01$, and $\Delta = 0.0$. 
However, the CAL method shows unsatisfactory accuracy due to the issues reported previously \cite{qin2019adversarial}.
Compared to ERM, CAPIFY shows slightly lower accuracy, a trade-off noted in multiple studies \cite{pessach2022review}. Notably, real-world data indicates a greater reduction in unfairness than in accuracy.
Moreover, CAPIFY exhibits robustness and counterfactual fairness attributes (see Tab.~\ref{tab:sim}), making it the favored model when assessing both concepts. For more results, see $\S$~\ref{app:simulation}.

\section{Discussion and Future Work}
\label{sec:discussion}
In this study, we introduce a comprehensive method considering individual fairness (IF) and robustness within an underlying causal model. We establish adversarial learning through the use of CAP. Remarkably, our CAP strategy sets itself apart by not requiring assumptions for all categorical features, a departure from the approach by \citet{ehyaei2023robustness}. Our CAP framework exclusively focuses on sensitive features.

In this study we use the discrete sensitive features for simplicity, every theoretical and numerical part are satisfied for continuous sensitive attribute as well.
Our approach avoids specific assumptions for defining IF based on the $L$-Lipschitz formulation. Instead, we can reframe everything using the $\epsilon$-$\delta$ formulation.
The optimization in Eq.~\ref{eq:optimization} may yield nonlinear decision boundaries, particularly with numerous features. To tackle this, we adopt the locally linear regularizer (LLR) proposed by \citet{qin2019adversarial}. LLR is advantageous in deep learning for countering overfitting, enhancing generalization through smoother function learning, and attaining leading computational performance.

An objection to our approach is that, like many fair learning methods, although we address unfairness by introducing a regularizer, there's no assured theoretical guarantee for the resulting classifier to uphold individual fairness. In future research, our goal is to develop a classifier with theoretical foundations that endorse CAPI fairness principles. 
\bibliography{aaai24}
\clearpage
\section{Appendix}
\label{sec:appendix}
%
%
\subsection{Glossary Table}
\tablefirsthead{\toprule Symbol&\multicolumn{1}{c}{Notion} \\ \midrule}
\tablehead{%
\multicolumn{2}{c}%
{{\bfseries  Continued from previous column}} \\
\toprule
Symbol&\multicolumn{1}{c}{Notion}\\ \midrule}
\begin{supertabular}{p{3cm} p{5cm}}
    SCM &  structural causal model \\
    CAP & causal adversarial perturbation \\
    IF & individual fairness \\
    CAPI & CAP-based individual fairness \\
	$\V$ & random variable \\
    $v$ & instance of random variable $\V$\\
    $[\mathbf{F}]_i$ & referring to the $i$-th column vector of matrix $\mathbf{F}$ \\
    $[\mathbf{F}]_{i,j}$ & representing the entry at row $i$ and column $j$ of $\mathbf{F}$ \\
    $\mathcal{V}$ & feature space \\
    $\mathcal{M}$ & structural causal model \\
    $\mathcal{G}$ & directed acyclic graph\\
    $\V=\{\V_i\}^{n}_{i =1}$ & observed (indigenous) random variables\\
    $\U=\{\U_i\}^{n}_{i =1}$ & represents a set of noise (exogenous) random variables \\
    $\pa(i)$ & parents of feature $\V_i$ w.r.t. DAG \\ 
    $\bF$ &  the set of structural equations of SCM\\
    $\bP_{\U}$ & represents the probability distribution over the exogenous variables \\
    $\bP_{\V}$ & entailed distribution over  endogenous space\\
    $\F: \cU \rightarrow \cV$ & reduced-form mapping from exogenous to endogenous space \\
    $\F^{-1}: \cV \rightarrow \cU$ & inverse image of $\F$ function \\
    $v^{\cf}$ & counterfactual statements for a given instance $v$ \\
    $do(\V_{\cI} \hi \theta))$ & do-operator corresponds to hard intervention \\
    $v_{\theta}^{\cf}$ & counterfactual w.r.t. hard interventions \\
    $\CF(v, do(\V_{\cI} \hi \theta))$ & counterfactual w.r.t. $do(\V_{\cI} \hi \theta))$ \\
    $\bS$ & sensitive attribute \\
    $\cS$ & the levels set of sensitive attribute \\
    $\ddot{v}_s$ & the twins of $v$ is obtained by $\CF(v,do(\bS \hi s))$\\
    $\ddot{\bv}$ & the set of all twins of $v$\\
    $d_{\cX}$ & metric on the feature space \\
    $d_{\cY}$ & metric on the label space \\
    $\cP_{\mathcal{D}}$ & observational probability distribution \\
    $B_{\Delta}(v)$ & perturbations ball for the instance $v$ \\
    $\ell$ & the classification loss function \\
    $\psi$ & the weights of the parametric classifiers\\
    $\cQ$ & semi-latent space \\
    $T: \cV \rightarrow \cQ$ & mapping from feature space to semi-latent space\\
    $v\odot_{I} \theta$ & represents a masking operator that modifies the values of $I$ entries in vector $v$ by replacing $\theta$ \\
    $(\cU_{i}, d_{\cU_i})$ &  metric space for the latent space\\
    $(\cS_i, d_{\cS_i})$ & the pseudometric or metric space for sensitive feature\\
    $(\cQ ,d_{\cQ})$ & product metric on $\cQ$\\
    $d_{\textit{fair}}$ & fair metric \\
    $\Delta$ & perturbation radius \\
    $B^{\CAP}_{\Delta}(v)$ & causal adversarial perturbation around $v$ with radii $\Delta$\\
    $B_{\Delta}^{\cQ}(q)$ & the closed ball with radii $\Delta$ in semi-latent space\\
    $\Z$ & categorical part of features \\
    $B^{\cQ_{\Plus}}_{\Delta}(q)$ & the closed ball within $\mathcal{Q}$, limited to modifications in continuous features \\
    $B^{\Plus}_{\Delta}(v)$ & the CAP around $v$ that the only continuous feature is perturbed \\
    $\Theta_{\Delta}$ & the set of categorical levels inside $B^{\CAP}_{\Delta}(v)$ \\
    $\Delta_{\theta}$ & represents the value of the continuous part of $\Delta$\\
    $A_{\Delta}^{\neq}$ & unfair area with radii $\Delta$\\
    $A_{C}^{\neq}$ & unfair area corresponds to counterfactual fairness\\
    ${A_{C}^{\neq}}_+$ & the positive regions of counterfactual fairness \\
    ${A_{C}^{\neq}}_-$ & the negative regions of counterfactual fairness \\
    $\mathcal{D} = \{(v_i, y_i)\}_{i = 1}^{n}$ & the set of observations \\
\end{supertabular}

\label{sec:gt}
%
%
%
%
\subsection{Theoretical Background}
\label{sec:atc}
\begin{definition}[Hard and Soft Intervention]
With Hard interventions (denoted by the \textbf{do}-operator notation $\cM^{do(\V_{\cI} \hi \theta)}$), a subset $\cI\subseteq [n]$ of feature values $\V_{\cI}$ is forcibly set to a constant $\theta\in\bR^{|\cI|}$ by excluding specific components of the structural equations:
\begin{equation*}
\label{eq:hard}
\bF^{do(\V_{\cI} \hi \theta)}= 
\begin{cases}
\textbf{V}_i := \theta_i & \text{if} \ i \in \cI \\
\textbf{V}_i := f_i (\V_{\Pa(i)}, \U_i )& \text{otherwise}
\end{cases}
\end{equation*}
Hard interventions disrupt the causal relationship between the affected variables and all of their ancestors in the causal graph, while preserving the existing causal relationships. In contrast, soft interventions maintain all causal relationships while modifying the structural equation functions.
For example, \textbf{additive (shift) interventions} \cite{eberhardt2007interventions} with symbol $\cM^{do(\V_{\cI} \si \delta)}$, 
\footnote{In the causality literature, the do-operator is only applied to hard interventions. In this work, to avoid using more notation, we use the $do(\V_{\cI} \si \delta)$ for additive interventions too.}
were changed the features $\V$ by some perturbation vector $\delta\in\bR^{n}$: 
\begin{equation*}
\label{eq:soft}
\bF^{do(\V_{\cI} \si \delta)} = \left\{V_i := f_i\left(\V_{\Pa(i)}, \U_i\right)+\delta_i\right\}_{i=1}^{n}.
\end{equation*}
\end{definition}
\begin{definition}[$\epsilon$-$\delta$ Individual Fairness] 
 Let us consider $\epsilon \geq 0$, $\delta \geq 0$, and a mapping $h: \mathcal{V} \rightarrow \mathcal{Y}$. Individual fairness is said to be satisfied by $h$ if:
 \begin{equation*}
 \forall v, w \in \cV  \quad d_{\cV}(v, w) \leq \delta \quad \Longrightarrow \quad d_{\cY}(h(v), h(w)) \leq   \epsilon 
\end{equation*}
where $d_{\cX}$ and $d_{\cY}$ represent metrics on the input and output spaces respectively, and $L \in \mathbb{R_+}$.
\end{definition}
\begin{definition}[pseudometric space] 
\label{def:pseudo}
A pseudometric space $(X,d)$ is a set $X$ together with a non-negative real-valued function $d:X\times X\longrightarrow \mathbb {R} _{\geq 0}$, called a pseudometric, such that for every $x,y,z \in X$,
\begin{itemize}
\item $d(x,x)=0.$
\item Symmetry: $d(x,y)=d(y,x)$
\item Triangle inequality: $d(x,z)\leq d(x,y)+d(y,z)$
\end{itemize}
The trivial example of pseudometric $d(x,y)=0$ for all $x,y\in X$
\end{definition}

\begin{definition}[Middle Intervention \cite{ehyaei2023robustness}]
\label{def:midint}
Consider SCM $\mathcal{M}$, with $n$ and $m$ continuous and categorical variables. 
Let the indexes of categorical  and continuous variables in vector $V$ be $\mathcal{I}_{\text{cat}}$  and $\mathcal{J}_{\text{con}}$ ($\mathcal{I}_{\text{cat}} \cup \mathcal{J}_{\text{con}} =\{1,2, \dots, n+m\}$).
The middle intervention $\mathcal{M}^{\theta_{\mathcal{I}}, \delta_{\mathcal{J}}}$  fixes the values of a subset $\mathcal{I} \subset \mathcal{I}_{\text{cat}}$ of categorical features $\mathbf{V}_{\mathcal{I}}$ to some fixed $\theta_{\mathcal{I}} \in\mathcal{Z}^{|\mathcal{I}|}$ and additive intervened the continuous features $\mathbf{V}_{\mathcal{J}}$ of a subset $\mathcal{J} \subset \mathcal{J}_{\text{con}}$ by some $\delta_{\mathcal{J}} \in \mathbb{R}^{|\mathcal{J}|}$ while preserving all other causal relationships.
The description of the structural equations for $\mathbb{S}_{\mathcal{M}}^{\delta_{\mathcal{I}},\theta_{\mathcal{J}}}$ follows:
\begin{equation}
\mathbf{V}_i := 
\begin{cases}
\theta_i & \text{if} \ i \in \mathcal{I} \\
f_i (\mathbf{V}_{\text{pa}(i)}, \mathbf{U}_i)+\delta_i, &  \text{if} \ i \in \mathcal{J}  \\
f_i (\mathbf{V}_{\text{pa}(i)}, \mathbf{U}_i) & \text{otherwise}
\end{cases}
\end{equation}
Similar to other interventions, the middle intervention's counterfactual is defined as 
$\mathbf{CF}\left(v, \theta_{\mathcal{I}}, \delta_{\mathcal{J}}; \mathcal{M}\right) = \mathbf{S}_{\mathcal{M}}^{\theta_{\mathcal{I}}, \delta_{\mathcal{J}}}(\mathbf{S}_{\mathcal{M}}^{-1}(v))$. 
\end{definition}

%
%
\section{Proofs}
\begin{lemma}
\label{pr:estimate}
    Consider a function $f:\mathbb{R}^n \rightarrow \mathbb{R}$ that is once-differentiable and a local neighborhood defined by $B(\Delta) = \{ \delta \in \mathbb{R}^n: \| \delta\| \leq \Delta\}$. Then for all $\delta \in B(\Delta)$:
    \begin{equation}
        |f(v + \delta) - f(v)| \leq |\delta^T \nabla_v f(v)| + \gamma(\Delta, v)
    \end{equation}
    where $\displaystyle \gamma (\Delta, v) = \max_{\delta \in B(\Delta)} |f(v + \delta)  - f(v) -\delta^T \nabla_v f(v)|$.
\end{lemma}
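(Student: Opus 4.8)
The plan is to prove this by a direct add-and-subtract maneuver followed by the triangle inequality, exploiting the fact that the quantity $\gamma(\Delta, v)$ is \emph{defined} to be exactly the worst-case first-order remainder over the ball $B(\Delta)$. First I would fix an arbitrary $\delta \in B(\Delta)$ and write the total increment as the sum of its linear part and the remainder:
\begin{equation*}
f(v+\delta) - f(v) = \delta^T \nabla_v f(v) + \bigl( f(v+\delta) - f(v) - \delta^T \nabla_v f(v) \bigr).
\end{equation*}

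Next I would apply the triangle inequality to the absolute value of the left-hand side, splitting it along this decomposition:
\begin{equation*}
|f(v+\delta) - f(v)| \leq |\delta^T \nabla_v f(v)| + |f(v+\delta) - f(v) - \delta^T \nabla_v f(v)|.
\end{equation*}
The final step is to bound the second summand on the right. Since $\delta$ lies in $B(\Delta)$, the remainder term $|f(v+\delta) - f(v) - \delta^T \nabla_v f(v)|$ is no larger than the maximum of this same expression over all of $B(\Delta)$, which is precisely $\gamma(\Delta, v)$ by its definition. Substituting this bound yields the claimed inequality $|f(v+\delta) - f(v)| \leq |\delta^T \nabla_v f(v)| + \gamma(\Delta, v)$, and since $\delta \in B(\Delta)$ was arbitrary, the statement holds for all such $\delta$.

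I do not expect any genuine obstacle here: the result is essentially a consequence of the triangle inequality combined with the tautological observation that any single value is dominated by the maximum of its defining family. The once-differentiability hypothesis serves only to guarantee that $\nabla_v f(v)$ exists so that the linear term and the remainder are well defined. The one point worth a remark is that the maximum defining $\gamma(\Delta, v)$ should be understood to exist (or be replaced by a supremum); continuity of $f$ together with compactness of $B(\Delta)$ ensures attainment, but the argument above goes through verbatim even if $\gamma$ is taken as a supremum, since the bound $|f(v+\delta) - f(v) - \delta^T \nabla_v f(v)| \leq \gamma(\Delta, v)$ holds in either case.
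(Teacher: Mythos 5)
Your proof is correct and matches the paper's own argument essentially verbatim: both add and subtract the linear term $\delta^T \nabla_v f(v)$, apply the triangle inequality, and bound the remainder by the maximum defining $\gamma(\Delta, v)$. Your closing remark about attainment versus supremum is a sensible extra precaution, but nothing further is needed.
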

\begin{proof}
Firstly, let us express $\abs{\ell(x+\delta) - \ell(x)}$ as follows:
\begin{align*}
|f(x+\delta) - f(x)| &= \\|\delta^T \nabla_x f(x) + &f(x+\delta) -f(x) -\delta^T \nabla_x f(x)|.   
\end{align*}
By the triangle inequality, we can establish the following bound:

$$\abs{f(x+\delta) - f(x)} \leq \abs{\delta^T \nabla_x f(x)} + g(\delta; x)$$,

where $g(\delta; x) = \abs{f(x+\delta) -f(x) -\delta^T \nabla_x f(x)}$. Notably, since
$\gamma(\epsilon, x) = \max_{\delta \in B(\epsilon)} g(\delta; x),$

it follows that for all $\delta \in B(\epsilon)$,
$$
\abs{f(x+\delta) - f(x)} \leq \abs{\delta^T \nabla_x f(x)} + \gamma(\epsilon, x).
$$
\end{proof}

\subsection*{Proposition~\ref{pr:ballShape}}

Writing the definitions yields the proof directly.
By the Def. \ref{def:cap}, the $B^{\CAP}_{\Delta}(v)$ is equal to:
\begin{equation*}
\begin{aligned}
& B^{\CAP}_{\Delta}(v)  =   \\
& T^{-1}(B_{\Delta}^{\cQ}(T(v))) =
T^{-1}(\{ q = (z, x) \in B_{\Delta}^{\cQ}(T(v)) \}) =  \\
& T^{-1}(\big\{ \bigcup_{\theta \in \Theta_{\Delta}} (\theta, x) \in B_{\Delta}^{\cQ}(T(v))\big\}) =  \\
& \bigcup_{\theta \in \Theta_{\Delta}}  \big\{T^{-1}( (\theta, x) \in B_{\Delta}^{\cQ}(T(v))\big\}) =  \\
& \bigcup_{\theta \in \Theta_{\Delta}}  T^{-1}(\big\{ (\theta, x):   d_{\cQ}((\theta, x),(T(v)_{|_{cat}},T(v)_{|_{con}}) \leq \Delta\big\}) = \\
& \bigcup_{\theta \in \Theta_{\Delta}}  T^{-1}(\big\{ (\theta, x):   d_{\cX}( x,T(v)_{|_{con}}) \leq \Delta_{\theta}\big\}) = \\
& \bigcup_{\theta \in \Theta_{\Delta}}  T^{-1}(\{ (\theta, T(v)_{|_{con}}) + \delta: \delta_{|\mathcal{X}} \in D^{\mathcal{X}}_{\Delta_{\theta}} \land \delta_{|\mathcal{Z}} = 0 \} ) = \\
& \bigcup_{\theta \in \Theta_{\Delta}}  T^{-1}(\{ T(\CF(v, \theta)) + \delta: \delta_{|\mathcal{X}} \in D^{\mathcal{X}}_{\Delta_{\theta}} \land \delta_{|\mathcal{Z}} = 0 \} ) = \\
& \bigcup_{\theta \in \Theta_{\Delta}} B^{\Plus}_{\Delta_{\theta}}(\CF(v, \theta)) 
\end{aligned}
\end{equation*}
In the above, let $\Delta_{\theta}$ represent the value of the continuous part of $\Delta$. 
The proof is completed by verifying the equation $T(\CF(v, \theta)) = (\theta, T(v)_{|_{con}}) = T(v)\odot_{I} \theta$ by Eq. \ref{eq:doing}, which is correct by definition.

\subsection*{Corollary~\ref{pr:decompose}}

    Let $q = (z, x) \in B_{\Delta}^{\cQ}(T(v))$ represent a point in the semi-latent space, and $\mathcal{S} = \{s_1, s_2, \dots, s_k \}$ denote the levels of the sensitive feature $\bS$. Considering that $\bS$ is the only protected feature, and other categorical variables are not partially protected, and taking into account that categorical variables have a discrete topology, we can find a value $\Delta_0$ such that for $\Delta \leq \Delta_0$, the set $\Theta_{\Delta} = \{ z' \in \mathcal{Z} : (z',.) \in B_{\Delta}^{\cQ}(T(v)) \}$ contains $z'$-values where all categorical variables, except $\bS$, have fixed values. To find $\Delta_0$, it is sufficient to consider
    \begin{equation*}
        \Delta_0 = \min_{z,z' \in \mathcal{Z}}d_{\mathcal{Z}}(z,z') \quad \text{subject to} \quad \Delta_0 > 0
    \end{equation*}
    where $\mathcal{Z}$ is the set of all levels of categorical features. Since $\mathcal{Z}$ is a finite set, $\Delta_0$ exists.
    Thus, we can assume that $\mathcal{M}$ has only one categorical variable, denoted as $\bS$. As a result, $\Theta_{\Delta} = \{s_1, s_2, \dots, s_k \}$.
    Furthermore, since $\bS$ is a sensitive feature, $d_{\mathcal{S}}(s,s') = 0$, which implies $\Delta_{\theta} = \Delta$. By utilizing Proposition~\ref{pr:ballShape}, we can write:
    \begin{equation*}
    \begin{aligned}
        B^{\CAP}_{\Delta}(v) = & \bigcup_{\theta \in \Theta_{\Delta}} B^{\Plus}_{\Delta_{\theta}}(\CF(v, \theta)) \\
        & \bigcup_{s \in \cS} B^{\Plus}_{\Delta}(\CF(v, s)) = \bigcup_{s \in \mathcal{S}} B^{\Plus}_{\Delta}(\ddot{v}_s)
    \end{aligned}
    \end{equation*}
    This completes the proof.

\subsection*{Corollary~\ref{pr:twinDef}}

By employing Corollary~\ref{pr:decompose}, we can express the result as follows:
\begin{equation*}
\begin{aligned}
    B^{\CAP}_{0}(v)  & = \lim_{\Delta \rightarrow 0} B^{\CAP}_{\Delta}(v) = 
    \lim_{\Delta \rightarrow 0} \bigcup_{s \in \mathcal{S}} B^{\Plus}_{\Delta}(\ddot{v}_s) = \\
    &\bigcup_{s \in \cS}  \lim_{\Delta \rightarrow 0} T^{-1}(\{ T(\ddot{v}_s) + \delta: \delta_{|\mathcal{X}} \in D^{\mathcal{X}}_{\Delta} \land \delta_{|\mathcal{Z}} = 0 \} ) = \\
    &\bigcup_{s \in \cS}  \lim_{\Delta \rightarrow 0} T^{-1}(\{ T(\ddot{v}_s) \} ) = 
    \bigcup_{s \in \mathcal{S}} \ddot{v}_s  
\end{aligned}
\end{equation*}
The last equation holds true because as $\Delta$ approaches zero, $\delta$ also approaches zero.

\subsection*{Proposition~\ref{pr:notions}}

    As shown in Proposition~\ref{pr:twinDef}, the CAP set contains the twins of each instance $v$. If we have individual fairness for classifier $h$, it implies that for all $s \in \cS$, we have $d_{\cY}(h(v), h(\ddot{v}_s)) \leq d_{\cV}(v,\ddot{v}_s)$. Considering that, by definition, the distance $d_{\cV}(v,\ddot{v}_s) = 0$, it follows that for each twin, $d_{\cY}(h(v), h(\ddot{v}_s))$ is also zero. Therefore, individual fairness guarantees counterfactual fairness.

    To disprove the inverse of the proposition, consider a linear SCM, denoted as $\cM$, consisting of one sensitive attribute $\bS$ and two manipulable continuous features $\X_1$ and $\X_2$. The model is characterized by the following structural equation and classifier:
    \begin{equation*}
    \begin{cases}
    S := S_A, & U_A \sim \mathcal{B}(0.5) \\
    X_1 := U_1, & U_1 \sim \mathcal{N}(0,1) \\
    X_2 := X_1 + U_2, & U_2 \sim \mathcal{N}(0,1) \\
    h(v) = \text{sign}(\omega \cdot v - b), & \omega = (0,1,1), \quad b = 5
    \end{cases}
    \end{equation*}
    Here, $\mathcal{B}(p)$ represents the Bernoulli distribution with probability $p$, and $\mathcal{N}(\mu,\sigma^2)$ is the normal distribution with mean $\mu$ and variance $\sigma^2$.
    In this specific example, it is observed that the sensitive attribute does not act as a parent to other features within $\mathcal{M}$. Furthermore, the classifier is entirely independent of the sensitive attribute, implying that $h$ exhibits counterfactual fairness with respect to $\bS$.

    To prove that CAPI fairness implies adversarial robustness, we rely on Corollary~\ref{pr:decompose}. Given CAPI fairness on $B^{\CAP}_{\Delta}(v)$, we can infer its validity for $B^{\Plus}_{\Delta}(v)$. Consequently, we obtain the property $d_{\cY}(h(v), h(w)) \leq d_{\cV}(v,w)$. Since $B^{\Plus}_{\Delta}(v)$ represents an adversarial perturbation around $v$, the adversarial robustness property follows.

    The converse assertion of the non-implantation of IF in the context of adversarial robustness is straightforward.

\subsection*{Proposition~\ref{pr:unfairArea}}

    To prove our claim, we relied on the theorems presented in the paper by \citet{ehyaei2023robustness}. According to Corollary~\ref{pr:decompose}, the unfair area comprises points where the classifier labels differ for instances within the continuous perturbation around twins. First, we try to find the unfair area with respect to twins of each instance.
    As stated in Proposition 3.5 of \citet{ehyaei2023robustness}, the construction of the unfair area for counterfactual fairness is as follows: 
    \begin{equation*}
    \begin{aligned}
    A_{C}^{\neq} = \{v = (s, x)\in \mathcal{V} : & \mathrm{sign}((s-(1-s))*h(v)) \geq 0 \quad \land \\ & dist(x,L) \leq   \frac{|w^T\bigcdot [F]_1|}{\|w\|_{p^*}} \}
    \end{aligned}
    \end{equation*}  
    The CAPI unfair area is determined by buffering the counterfactual unfair area using the radius of the balls $B^{\Plus}_{\Delta}(\ddot{v}_s)$ for all $s \in \cS$. As the SCM is linear, these radii are equal, and according to Proposition 3.7 of \citet{ehyaei2023robustness}, they can be expressed as $\frac{\Delta|w^T{-1} \times F{-1} |{p^*}}{|w|{p^*}}$. This equation validates our claim that the unfair area is constructed by: 
    \begin{equation*}
A_{\Delta}^{\neq} = \{v = (s, x)\in \mathcal{V} : dist(x,A_{C}^{\neq}) \leq   \frac{\Delta\|w^T_{-1} \times F_{-1} \|_{p^*}}{\|w\|_{p^*}} \}
\end{equation*}

\subsection*{Corollary~\ref{pr:linfair}}

When $w^T \cdot [F]_{1} = 0$, Proposition~\ref{pr:unfairArea} ensures that the classifier $h$ is individually fair for any instance of $v$. Conversely, if the classifier $h$ is individually fair, then for all $s' \in \mathcal{S}$, according to Corollary 3.3 of \citet{ehyaei2023robustness}, we have:
\begin{equation*}
\begin{aligned}
 h(\ddot{v}_{s'}) = & \text{sign}(w^T \cdot (v+ (s' - s) \cdot [F]_{1}) - b) = \\ &
\text{sign}(w^T \cdot v + (s' - s) \cdot w^T \cdot [F]_{1} - b) = h(v) \Rightarrow \\  
& |w^T \cdot v + (s' - s) \cdot w^T \cdot [F]_{1} - b| = \text{constant} \\ & \forall a' \in \mathcal{A} \Rightarrow (s' - s) \cdot w^T \cdot [F]_{1} = 0.
\end{aligned}
\end{equation*}
The final equation concludes the proof.

\subsection*{Lemma~\ref{pr:cum}}

According to Proposition \ref{pr:unfairArea}, the extent of counterfactual unfairness corresponds to $\mathbb{P}_{\mathcal{V}}(A_{C}^{\neq})$. If we apply a post-processing method and invert the labels of a subset $\mathcal{C} \subset {A_{C}^{\neq}}_-$, the unfair area is reduced not only by $\mathbb{P}_{\mathcal{V}}(\mathcal{C})$, but also by incorporating the area of $\mathcal{C}_+$ into the fair region due to the similar label sign of its counterfactual. Hence, the mitigation of unfair area is given by:
\begin{equation*}
\mathbb{P}_{\mathcal{V}}(A_{C}^{\neq}) - \mathbb{P}_{\mathcal{V}}(\mathcal{C}) - \mathbb{P}_{\mathcal{V}}(\mathcal{C}_+)
\end{equation*}

%
%
%
%
\section{Example}
\label{sec:example}
For the purpose of providing a clearer understanding of the definitions and theoretical aspects, we employ a simple example. We consider a linear SCM denoted as $\cM$, comprising one sensitive attribute $\bS$ and two manipulable continuous features $\X_1$ and $\X_2$. The model is specified by the following structural equations and classifier:
\begin{equation*}
\label{eq:lin_model}
\begin{cases}
S := U_S, & U_S \sim \mathcal{B}(0.5) \\
X_1 := 2S + U_1, & U_1\sim \mathcal{N}(0,1) \\
X_2 := S-X_1 + U_2, & U_2 \sim \mathcal{N}(0,1) \\
h(v) = \text{sign}(\omega \cdot v -b), & \omega = (0,1,1), \quad b = 5
\end{cases}
\end{equation*}
where $\mathcal{B}(p)$ represents a Bernoulli distribution with probability $p$, and $\mathcal{N}(\mu,\sigma^2)$ denotes a normal distribution with mean $\mu$ and variance $\sigma^2$. In the continuous part, we utilize the Euclidean metric, while in the semi-latent space, we adopt the $L_2$ product metric. To investigate the impact of the protected pseudometric, we consider both the trivial pseudometric and the Euclidean metric for the sensitive feature. The results of the CAP are presented in Figure \ref{fig:CAP}.
%
%
\section{Numerical Experiments Details}
\label{app:simulation}
%
%
\subsection{Synthetic Data Models}

The structural equations used to generate the SCMs in \S~\ref{sec:CE} are listed below.
For the LIN, NLM and IMF SCMs, we generate the protected feature $\bS$ and variables $\X_i$ according to the following structural equations:
\begin{itemize}
    \item linear SCM (LIN): 
\begin{equation*}
\label{eq:lin_model}
\begin{cases}
S := U_S, & 			U_S \sim \mathcal{B}(0.5) 	\\
X_1 := 2S + U_1, & 	U_1\sim \mathcal{N}(0,1)	\\
X_2 := S-X_1 + U_2, & 			U_2 \sim \mathcal{N}(0,1) \\
Y \sim \mathcal{B}((1 + exp(-(X_1 + X_2))^{-1})
\end{cases}
\end{equation*}
    \item Non-linear Model (NLM)
\begin{equation*}
\begin{cases}
S := U_S, & 			U_S \sim \mathcal{B}(0.5) 	\\
X_1 := 2S^2 + U_1, & 	U_1\sim \mathcal{N}(0,1)	\\
X_2 := S-X_1^2 + U_2, & 			U_2 \sim \mathcal{N}(0,1) \\
Y \sim \mathcal{B}((1 + exp(-(X_1 + X_2)^2)^{-1})
\end{cases}
\end{equation*}
    \item Independent Manipulable Feature (IMF)
\begin{equation*}
\begin{cases}
S := U_S, & 			U_S \sim \mathcal{B}(0.5) 	\\
X_1 := U_1, & 	U_1\sim \mathcal{N}(0,1)	\\
X_2 := U_2, & 			U_2 \sim \mathcal{N}(0,1) \\
Y \sim \mathcal{B}((1 + exp(-(X_1 + X_2))^{-1})
\end{cases}
\end{equation*}
\end{itemize}

where $\mathcal{B}(p)$ is Bernoulli random variables with probability $p$ and 
$\mathcal{N}(\mu,\sigma^2)$ is normal r.v. with mean $\mu$ and variance $\sigma^2$.
To generate ground truth $h(S, X_1, X_2)$, we use a linear model for LIN and IMF and non-linear methods for NLM SCM. In all of the synthetic models considered, we regard S as a binary sensitive attribute.
%
%
%
%
\subsection{Semi-Synthetic Data Model}

This semi-synthetic dataset encompasses gender, age, education, loan amount, duration, income, and saving variables, governed by the following structural equations:

\begin{align*}
\begin{cases}
G := U_G, \\
A := -35 + U_A, \\
E := -0.5 + \bigg(1 + e^{-\big(-1 + 0.5 G + (1 + e^{- 0.1 A})^{-1} + U_E \big)}\bigg)^{-1}, \\
L := 1 + 0.01 (A - 5) (5 - A) + G + U_L, \\
D := -1 + 0.1A + 2G + L + U_D, \\
I := -4 + 0.1(A + 35) + 2G + G E + U_I, \\
S :=  -4 + 1.5 \mathbb{I}_{\{I > 0\}} I + U_S, \\
Y \sim \text{Bernoulli}((1+e^{-0.3(-L-D+I+S+IS)})^{-1})
\end{cases}
\end{align*}

Where $\mathcal{B}$ and $\mathcal{G}$ represent the Bernoulli and Gamma distributions, respectively. The noise model were generated using the following formula:
\begin{align*}
\begin{cases}
U_G \sim \mathcal{B}(0.5) \\
U_A \sim \mathcal{G}(10, 3.5) \\
U_E \sim \mathcal{N}(0, 0.25) \\
U_L \sim \mathcal{N}(0, 4) \\
U_D \sim \mathcal{N}(0, 9) \\
U_I \sim \mathcal{N}(0, 4) \\
U_S\sim\mathcal{N}(0, 25) \\
\end{cases}
\end{align*}
We consider $G$ as a sensitive attribute.
%
%
%
%
\subsection{Real-World Data}
In our research, we have utilized the Adult dataset \cite{kohavi1996uci} and the COMPAS dataset \cite{washington2018argue} for our experimental analysis. To employ these datasets, we initially construct a SCM based on the causal graph proposed by \citet{nabi2018fair}.
For the Adult dataset, we incorporate features such as \textbf{sex}, \textbf{age}, \textbf{native-country}, \textbf{marital-status}, \textbf{education-num}, \textbf{hours-per-week}, and consider gender as a sensitive attribute.
In the case of the COMPAS dataset, the utilized features comprise \textbf{age}, \textbf{race}, \textbf{sex}, and \textbf{priors count}, which function as variables. Additionally, sex is considered a sensitive attribute.

For classification purposes, we apply data standardization prior to the learning process.

\begin{table*}
\centering
\begin{tabular}{ l  c@{\hspace{5pt}} c@{\hspace{5pt}} c@{\hspace{5pt}} | c@{\hspace{5pt}} c@{\hspace{5pt}} c@{\hspace{5pt}} | c@{\hspace{5pt}} c@{\hspace{5pt}} c@{\hspace{5pt}} | c@{\hspace{5pt}} c@{\hspace{5pt}} c@{\hspace{5pt}} | c@{\hspace{5pt}} c@{\hspace{5pt}} c@{\hspace{5pt}} | c@{\hspace{5pt}} c@{\hspace{5pt}} c@{\hspace{5pt}}}
	\toprule
    & \multicolumn{6}{c}{\textbf{Real-World Data}} 
	& \multicolumn{12}{c}{\textbf{Synthetic Data}}
    \\
	\cmidrule(lr){2-7}  \cmidrule(lr){8-19} 
	& \multicolumn{3}{c}{\textbf{Adult}} 
	& \multicolumn{3}{c}{\textbf{COMPAS}}
	& \multicolumn{3}{c}{\textbf{IMF}}
	& \multicolumn{3}{c}{\textbf{LIN}}
    & \multicolumn{3}{c}{\textbf{Loan}}
    & \multicolumn{3}{c}{\textbf{NLM}}
	\\
	\cmidrule(lr){2-4} \cmidrule(lr){5-7} \cmidrule(lr){8-10} \cmidrule(lr){11-13} \cmidrule(lr){14-16} \cmidrule(lr){17-19} 
	\multirow{1}{*}{\textbf{Trainer}} 
	& $M$
	& $U_{.01}$
    & $\cR_{.01}$
    & $M$
	& $U_{.01}$
    & $\cR_{.01}$
    & $M$
	& $U_{.01}$
    & $\cR_{.01}$
    & $M$
	& $U_{.01}$
    & $\cR_{.01}$
    & $M$
	& $U_{.01}$
    & $\cR_{.01}$
    & $M$
	& $U_{.01}$
    & $\cR_{.01}$
	\\
	\midrule
 AL & \textbf{0.53} & 0.19 & \textbf{0.01} & \textbf{0.35} & 0.15 & 0.01 & 0.27 & 0.28 & 0.1 & 0.19 & 0.9 & 0.24 & 0.63 & 0.27 & 0.16 & 0.34 & 0.54 & 0.35\\
CAL & 0.52 & 0.19 & 0.01 & 0.34 & 0.11 & \textbf{0.01} & 0.2 & 0.34 & 0.11 & 0.19 & 0.9 & 0.23 & 0.37 & 0.26 & 0.18 & 0.17 & 0.47 & 0.23\\
\textbf{CAPIFY} & 0.49 & \textbf{0.04} & 0.01 & 0.25 & \textbf{0.02} & 0.01 & \textbf{0.45} & \textbf{0.01} & \textbf{0.01} & 0.23 & \textbf{0.16} & \textbf{0.08} & 0.39 & \textbf{0.22} & 0.13 & 0.25 & \textbf{0.36} & 0.23\\
ERM & 0.53 & 0.18 & 0.01 & 0.35 & 0.18 & 0.01 & 0.44 & 0.03 & 0.01 & 0.34 & 0.43 & 0.16 & \textbf{0.7} & 0.35 & 0.22 & 0.42 & 0.55 & 0.38\\
LLR & 0.47 & 0.2 & 0.01 & 0.23 & 0.33 & 0.01 & 0.39 & 0.2 & 0.07 & 0.25 & 0.31 & 0.26 & 0.39 & 0.26 & \textbf{0.11} & 0.24 & 0.39 & \textbf{0.18}\\
ROSS & 0.39 & 0.11 & 0.03 & 0.26 & 0.17 & 0.01 & 0.44 & 0.02 & 0.01 & \textbf{0.36} & 0.45 & 0.1 & 0.67 & 0.38 & 0.26 & \textbf{0.43} & 0.53 & 0.4\\
	\midrule
\end{tabular}
\caption{The table presents additional results of our computational experiment in terms of Matthews coefficient (M, higher values are better), CAPI fairness metrics ($U_{.01}$, lower values are better), and the non-robust percentage concerning adversarial perturbation with radii $0.01$ ($\mathcal{R}_{.01}$, lower values are better). The best-performing methods for each trainer, dataset, and metric are highlighted in bold.}
\label{tab:sim2}
\end{table*}

%
%
\subsection{Training Methods}
In our study, we employ various training objectives to train the decision-making classifiers, denoted as $h(x)$. These training objectives are as follows:
\begin{itemize}
\item \textbf{Empirical risk minimization (ERM)}: This involves minimizing the expected risk with respect to the classifier parameters $\psi$, represented by 
\begin{equation*}
  \min_{\psi} {\bE}_{(v,y) \sim \cP_{\mathcal{D}}}[\ell(h_{\psi}(v), y)]  
\end{equation*}
\item \textbf{Adversarial Learning (AL)}: Adversarial learning involves training a model to withstand or defend against adversarial perturbation.
\begin{equation*}
  \min_\psi {\bE}_{(v,y) \sim \cP_{\mathcal{D}}}[ \max_{\delta \in B_{\Delta}(v)} \ell(h_\psi(v+\delta),y)]
\end{equation*}

\item \textbf{Local Linear Regularizer (LLR)}: This objective minimizes a combination of the expected risk, a regularization term involving gradients, and a term related to adversarial perturbations, given by 
\begin{align*}
\min_{\psi} {\bE}_{(v,y) \sim \cP_{\mathcal{D}}}&[\ell(h_{\psi}(v), y) + 
\mu_2 \norm{\nabla_v h(v)} + \\
& \mu_1  \max_{\norm{\delta} \leq \epsilon}|h(v+\delta) - \inner{\delta, \nabla_v  h(v)} -  h(v)|]
\end{align*}
\item \textbf{ROSS}: This method, based on a \citet{ross2021learning} work, seeks to minimize the expected risk along with an adversarial perturbation term, expressed as 
\begin{equation*}
    \min_{\psi} {\bE}_{(v,y) \sim \cP_{\mathcal{D}}}[\ell(h_{\psi}(v), y)+ \mu \min_{\delta \in B_{\Delta}(v)}\ell(h(v + \delta), \mathbf{1})]
\end{equation*}
\item \textbf{Causal Adversarial Learning (CAL)}: Causal adversarial perturbation is a component of the causal version of adversarial learning, rooted in the framework of CAP
\begin{equation*}
\min_\psi {\bE}_{(v,y) \sim \cP_{\mathcal{D}}}[ \max_{w \in B^{\CAP}_{\Delta}(v)} \ell(h_\psi(w),y)]
\end{equation*}
\item \textbf{CAPIFY}:
Exhibits similarities to the LLR method when the CAP is utilized as a perturbation attack.
\begin{equation*}
\begin{aligned}    
\min_{\psi} {\bE}_{(v,y) \sim \cP_{\mathcal{D}}}&[\ell(h_{\psi}(x), y) + \\
\quad & \mu_1 * \max_{s \in \mathcal{S}} \ell(h(\ddot{v}_s),y) \quad + \\
& \mu_2 *\gamma(\Delta, v) + \mu_3 *\| \nabla^{\mathcal{X}}_{v} f(v)\|_* ]
\end{aligned}
\end{equation*}
\end{itemize}
For our loss function $\ell$, we use the binary cross-entropy loss. 

%
%
%
%
\subsection{Hyperparameter Tuning}
The majority of the experimental setup is derived from the work of \citet{dominguez2022adversarial}. For each dataset associated with its corresponding label, we employ either the generalized linear model (GLM) or a multi-layer perceptron (MLP) with three hidden layers, each having a size of 100. The GLM is employed for LIN and IMF, while for other datasets, the MLP classifier is considered.

Each of these training objectives is applied to train six distinct datasets, utilizing 100 different random seeds. The optimization process employs the Adam optimizer with a learning rate of $10^{-3}$ and a batch size of 100. After optimizing the benchmark time and considering the training rate, we set the number of epochs to 10. To ensure comparability in benchmarking, we set all regularizer coefficients equal to 1.
%
%
%
%
\subsection{Metrics}

To compare the performance of trainers from accuracy, CAPI fairness, counterfactual fairness and adversarial robustness we use different 7 metrics as described below:

\begin{itemize}
    \item $\mathbf{A}$: The accuracy of a classifier is typically expressed as a percentage value.
    \item $\mathbf{M}$: The Matthews Correlation Coefficient (MCC) is a measure used in machine learning to assess the quality of binary classification models. The formula for the Matthews Correlation Coefficient is:
    $$\dfrac{(TP \times TN - FP \times FN)}{\sqrt{(TP + FP)(TP + FN) × (TN + FP)(TN + FN)}}$$
    Where:
    \begin{itemize}
        \item TP: True Positives
        \item TN: True Negatives
        \item FP: False Positives 
        \item FN: False Negatives
    \end{itemize}
    The MCC value ranges from $-1$ to $+1$, where $+1$ represents a perfect prediction, $0$ indicates random prediction, and $-1$ indicates perfect inverse prediction.
    \item $\mathbf{U_{\Delta}}$: Represents the proportion of data points located within the unfair area characterized by Def.~\ref{def:uai} with radius of $\Delta$. 
    \item $\mathbf{\mathcal{R}_{\Delta}}$: The notation $\mathcal{R}_{\Delta}$ denotes the fraction of data points that demonstrate non-robustness concerning adversarial perturbation with a radius of $\Delta$. This measure corresponds to the unfair area region in cases where a sensitive attribute is absent.
    \item $\mathbf{CF}$: represents the percentage of data points that exhibit unfairness concerning counterfactual fairness. This metric is equivalent to the unfair area when the perturbation radius is set to zero. 

\end{itemize}

%
%
%
%
\subsection{Additional Numerical Results}
The supplementary metrics are available in Table \ref{tab:sim2}. For the sake of simplicity, each value has been rounded to two decimal places, and the highest-performing indicator is highlighted. Figure \ref{fig:com_study_2} presents a bar plot accompanied by error bars that display the indicator values, along with the corresponding standard deviations attained through simulation.

The MCC score, much like accuracy, demonstrates a decrease when accounting for the incorporation of CAPIFY methods. However, in real datasets, the extent of this reduction is outweighed by the reduction in CAPI fairness. Notably, the performance of CAPIFY displays significant improvements on real datasets compared to synthetic datasets.

When considering the trade-off between accuracy and fairness, if we incorporate the combination of accuracy plus UAI as a measurement, the CAPIFY method emerges as the most effective algorithm. Notably, the CAPIFY method not only encapsulates counterfactual fairness but also demonstrates robustness. However, concerning robustness, certain methods that focus solely on robustness exhibit slightly superior performance compared to CAPIFY.

\begin{figure*}[ht]
\centering
\includegraphics[width=1\textwidth]{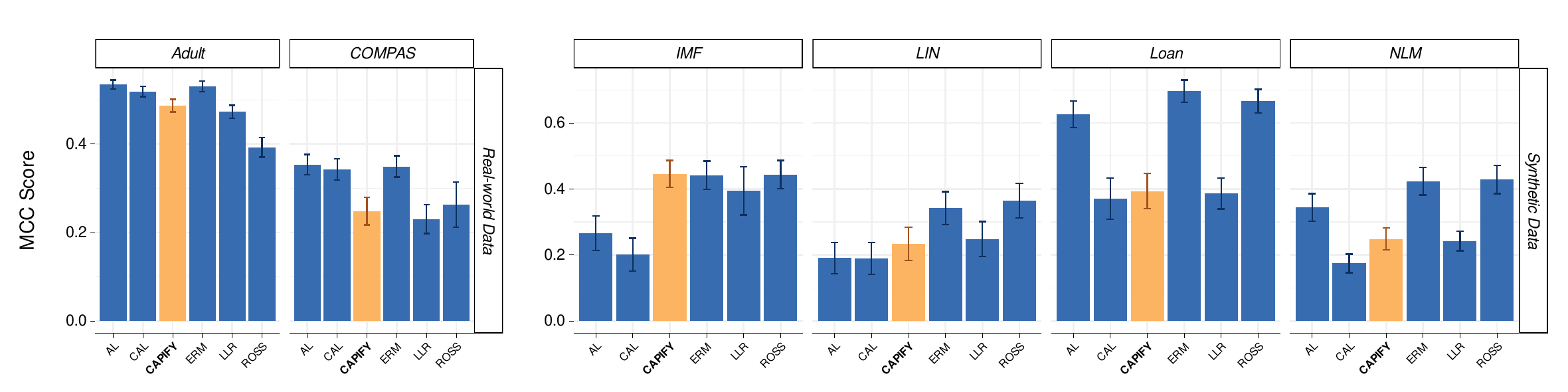}
\includegraphics[width=1\textwidth]{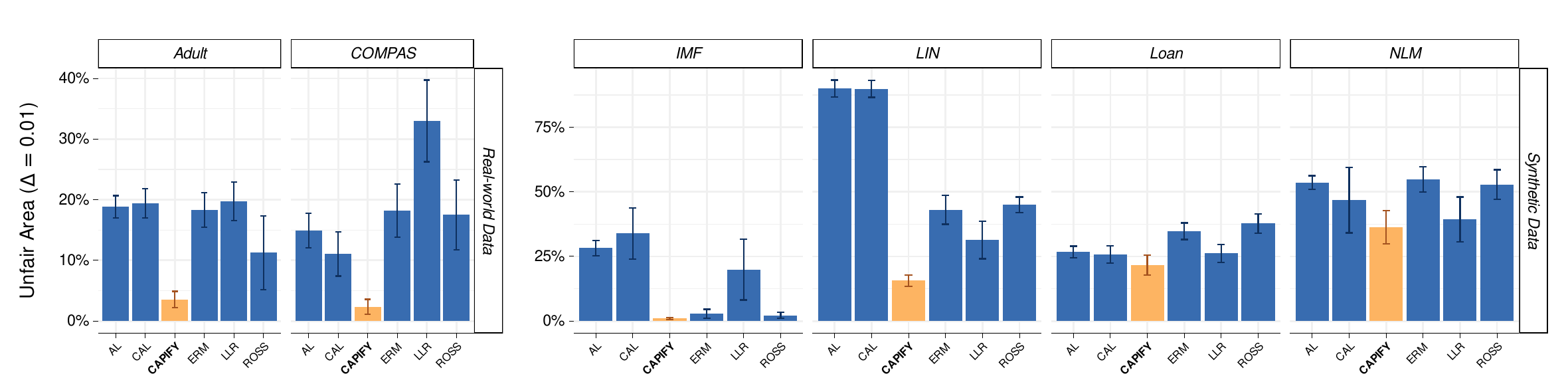}
\includegraphics[width=1\textwidth]{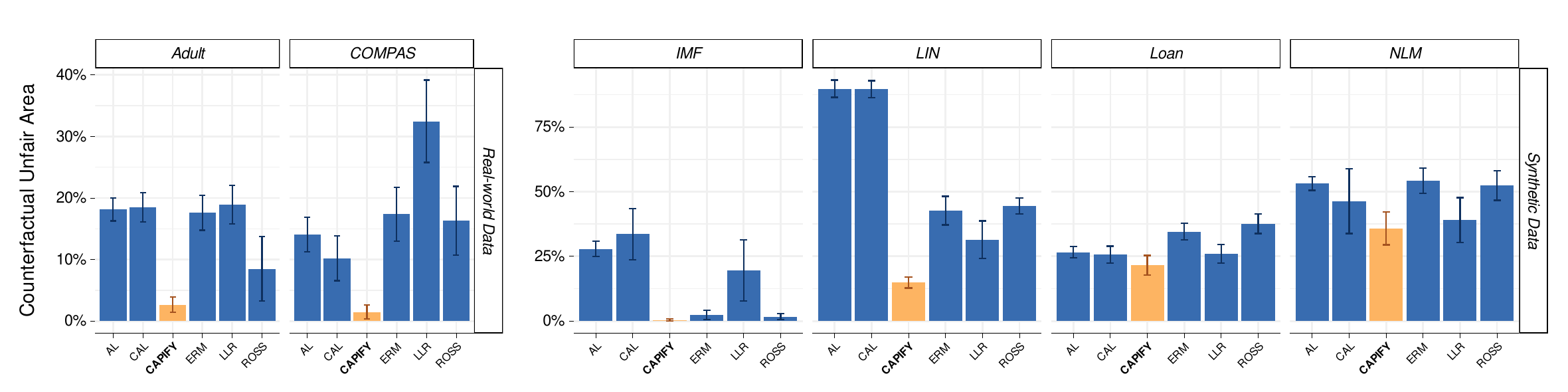}
\includegraphics[width=1\textwidth]{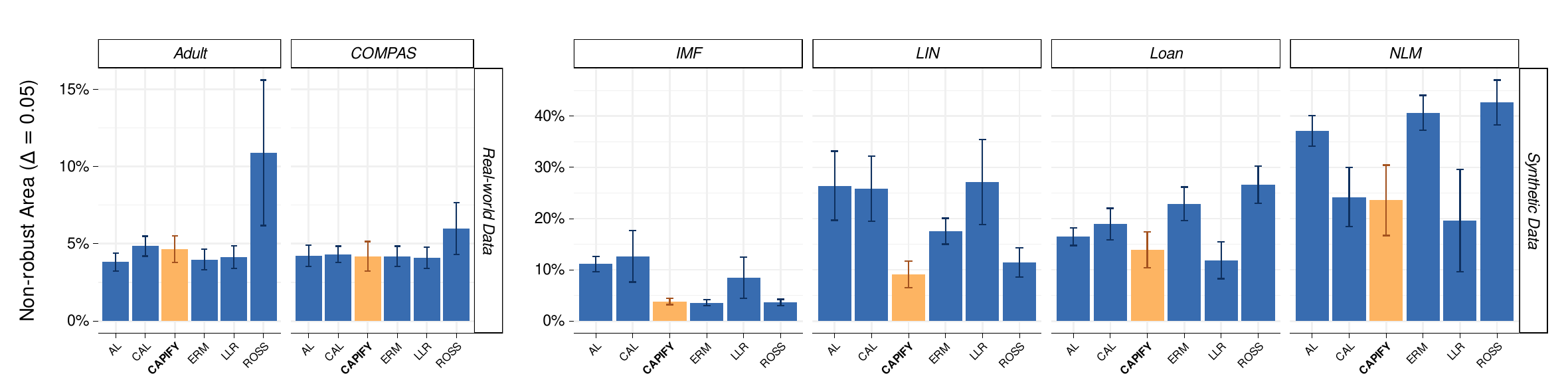}
\includegraphics[width=1\textwidth]{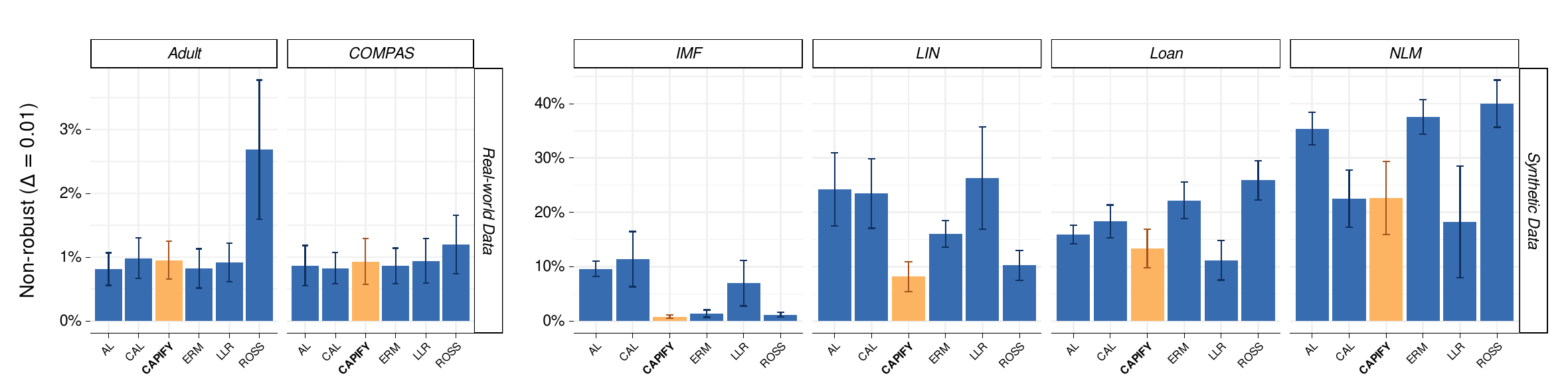}
\caption{The bar plots display the results of the computational experiments, including MCC scores, Unfair area ($\Delta = 0.01$), Unfair area for counterfactual fairness, and the percentage of non-robust area for $\Delta = 0.05$ and $\Delta = 0.01$.}
\label{fig:com_study_2}
\end{figure*}

\end{document}